\pdfoutput=1

\documentclass[accepted]{uai2021} 

\usepackage[american]{babel}

\usepackage{natbib} 
    \bibliographystyle{plainnat}
    
\usepackage{mathtools} 
\usepackage{booktabs} 
\usepackage{tikz} 
\usepackage{microtype}
\usepackage{graphicx}
\usepackage{subfigure}
\usepackage{booktabs} 
\usepackage{algorithm}
\usepackage{algorithmic}

\usepackage{amsmath,amsfonts,amsthm,bm}









\def\eqref#1{equation~\ref{#1}}









\def\1{\bm{1}}










\DeclareMathAlphabet{\mathsfit}{\encodingdefault}{\sfdefault}{m}{sl}
\SetMathAlphabet{\mathsfit}{bold}{\encodingdefault}{\sfdefault}{bx}{n}














\newcommand{\bx}{{\mathbf{x}}}

\newcommand{\bz}{{\mathbf{z}}}
\newcommand{\bt}{{\mathbf{t}}}

\newcommand{\Dc}{{\mathcal{D}}}


\newtheorem{corollary}{Corollary}

\newtheorem{lemma}{Lemma}

\usepackage{url}
\usepackage{graphicx}
\usepackage{multicol}
\usepackage{multirow}
\usepackage{booktabs}

\usepackage{wrapfig}
\usepackage{float}

\usepackage{pifont}
\usepackage{xcolor}
\usepackage{tabularx}
\usepackage{caption}
\usepackage{wrapfig}
\usepackage{fixltx2e}
\usepackage{thmtools,thm-restate}




\newcommand{\dre}{\texttt{DRE}}
\makeatletter

\usepackage{hyperref}
\usepackage{cleveref}[2012/02/15]

\title{Featurized Density Ratio Estimation}

%
%
\author[1]{\href{mailto:<kechoi@cs.stanford.edu>?Subject=Your UAI 2021 paper}{Kristy~Choi\thanks{Denotes equal contribution.}}{}} 
\author[1]{\href{mailto:<madelineliao@stanford.edu>?Subject=Your UAI 2021 paper}{Madeline~Liao$^*$}{}}
\author[1]{\href{mailto:<ermon@cs.stanford.edu>?Subject=Your UAI 2021 paper}{Stefano~Ermon}{}}
\affil[1]{%
    Computer Science Department\\
    Stanford University
}
\begin{document}
\maketitle

\begin{abstract}

Density ratio estimation serves as an important technique in the unsupervised machine learning toolbox. However, such ratios are difficult to estimate for complex, high-dimensional data, particularly when the densities of interest are sufficiently different. In our work, we propose to leverage an invertible generative model to map the two distributions into a common feature space prior to estimation. 
This featurization brings the densities closer together in latent space, sidestepping pathological scenarios where the learned density ratios in input space can be arbitrarily inaccurate. At the same time,
the invertibility of our feature map guarantees that the ratios computed in feature space are equivalent to those in input space. 
Empirically, we demonstrate the efficacy of our approach in a variety of downstream tasks that require access to accurate density ratios such as mutual information estimation, targeted sampling in deep generative models, and classification with data augmentation. 
\end{abstract}
\section{Introduction}
\label{intro}

\begin{figure}[!ht]
\centering
\includegraphics[width=.4\textwidth]{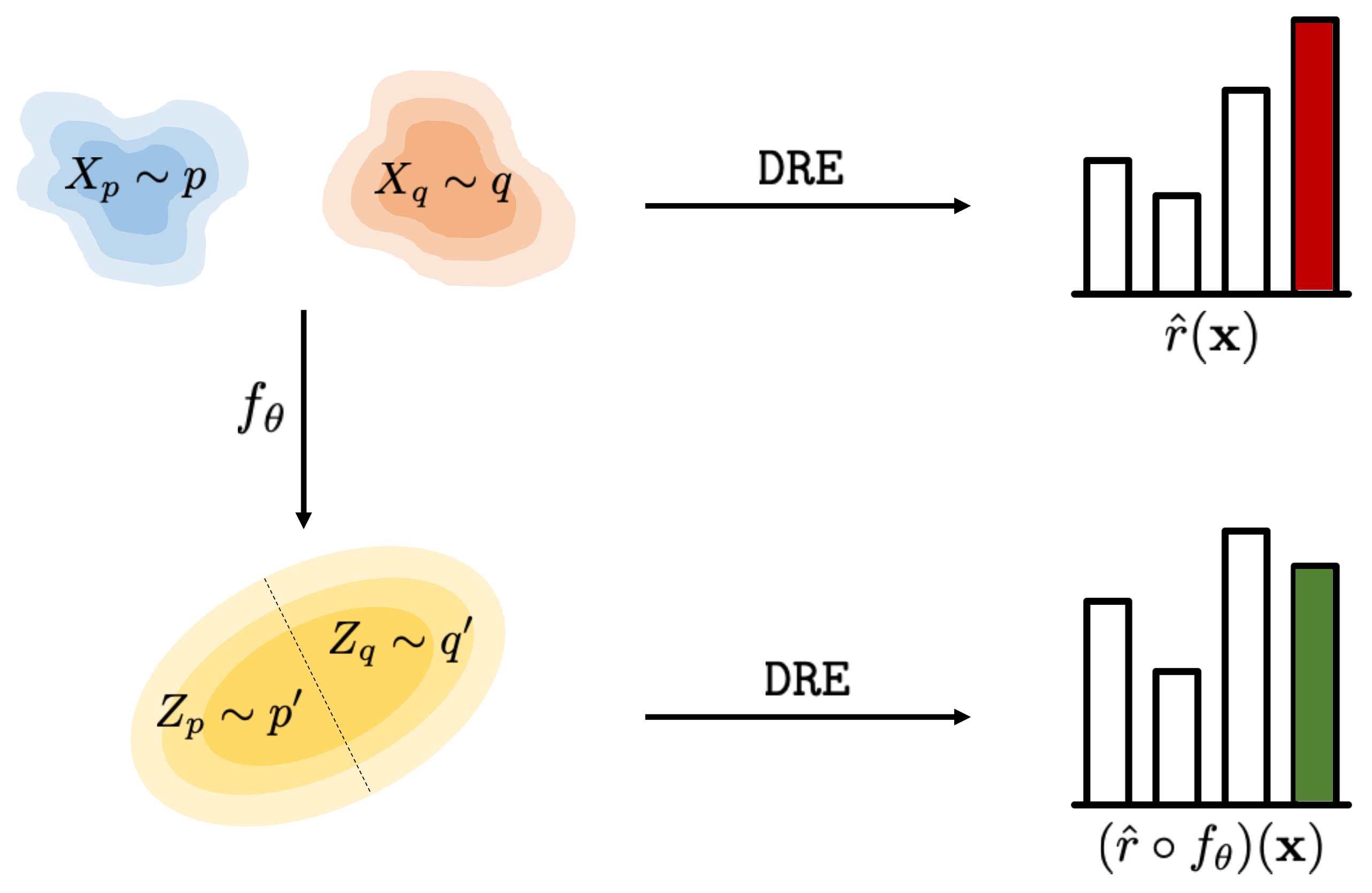}
\caption{Flowchart for the featurized density ratio estimation framework. Direct density ratio estimation using a black-box algorithm \texttt{DRE} on samples leads to poor ratio estimates $\hat{r}(\bx)$ when $p$ and $q$ are sufficiently different. By training a normalizing flow $f_\theta$ on samples from both densities and encoding them to a shared feature space prior to estimation, we obtain more accurate ratios $(\hat{r} \circ f_\theta)(\bx)$.}
\label{fig:model_flowchart}
\end{figure}

A central problem in unsupervised machine learning is that of density ratio estimation: given two sets of samples drawn from their respective data distributions, we desire an estimate of the ratio of their probability densities \citep{nguyen2007estimating,sugiyama2012density}. Computing this ratio gives us the ability to \textit{compare and contrast} two distributions, and is of critical importance in settings such as out-of-distribution detection \citep{smola2009relative,menon2016linking}, mutual information estimation \citep{belghazi2018mutual,song2019understanding}, importance weighting under covariate shift \citep{huang2006correcting,gretton2009covariate,you2019towards}, and hypothesis testing \citep{gretton2012kernel}. Related areas of research which require access to accurate density ratios, such as generative modeling \citep{gutmann2010noise,goodfellow2014generative,nowozin2016f} and unsupervised representation learning \citep{thomas2021likelihood}, have enjoyed tremendous success with the development of more sophisticated techniques for density ratio estimation. 

Despite its successes, density ratio estimation is an extremely hard problem when the two distributions of interest are considerably different \citep{cortes2010learning,yamada2013relative,rhodes2020telescoping}. 
The fundamental challenge in reliably estimating density ratios in this scenario is precisely the access to only a finite number of samples. As the distance between the densities increases, we become less likely to observe samples that lie in low-density regions between the two distributions. Therefore, without an impractically large training set, our learning algorithm is highly likely to converge to a poor estimator of the true underlying ratio.


To address this challenge, we propose a general-purpose framework for improved density ratio estimation that brings the two distributions closer together in latent space.
The key component of our approach is an invertible generative model (normalizing flow), which is trained on a mixture of datasets drawn from the two distributions and used to map samples into a shared feature space prior to ratio estimation \citep{rezende2015variational}.
Encoding the data via the normalizing flow transforms the observed samples from the two densities to lie within a unit Gaussian ball.
We observe that this contraction helps mitigate pathological scenarios where the learned ratio estimates are wildly inaccurate. 
The invertibility of our feature map then guarantees that the ratios computed in feature space are equivalent to those in input space.
We demonstrate the generality of our framework by pairing it with several existing density ratio estimation techniques, and explore various training procedures in estimation algorithms that require learning a probabilistic classifier. A flowchart of our featurized density ratio estimation algorithm can be found on Figure~\ref{fig:model_flowchart}.

Empirically, we evaluate the efficacy of our approach on downstream tasks that require access to accurate density ratios. First, we demonstrate that applying our approach to existing density ratio estimation techniques on synthetic data leads to better performance on downstream domain adaptation and mutual information (MI) estimation.
Next, we demonstrate the utility of our framework on a targeted generation task on MNIST \citep{lecun1998mnist}. By leveraging the ``featurized" density ratios for importance sampling from a trained generative model, we show that the resulting samples are closer to the target distribution of interest than the synthetic examples generated using input-space density ratios.
Finally, we illustrate that our method can be used to improve upon naive data augmentation methods by reweighing synthetic samples, outperforming relevant baselines on multi-class classification on Omniglot \citep{lake2015human}. 

The contributions of our work can be summarized as:
\begin{enumerate}
    \item We introduce a general-purpose algorithm for estimating density ratios in feature space and show its applicability to a suite of existing ratio estimation techniques.
    \item By leveraging the invertibility of our feature map, we prove that our featurized density ratio estimator inherits key properties such as unbiasedness and consistency from the original ratio estimation algorithm.
    \item On downstream tasks that require access to accurate density ratios, we show that our approach outperforms relevant baselines that compute
    ratios in input space. 
\end{enumerate}
\section{Preliminaries}
\label{prelims}
\subsection{Invertible Transformations via Normalizing Flows}
Deep invertible generative models, or normalizing flows, are a family of likelihood-based models that describe the two-way transformation between a complex, continuous probability density and a simple one by 
the change of variables formula \citep{rezende2015variational,papamakarios2019normalizing}. 
The flow is parameterized by a deep neural network $f_\theta: \mathcal{X} \rightarrow \mathcal{Z}$ with carefully designed architectures such that the overall transformation is composed of a series of bijective mappings with tractable inverses and Jacobian determinants \citep{dinh2016density,kingma2016improving,papamakarios2017masked,kingma2018glow,grathwohl2018ffjord,ho2019flow++}.
As a result, the density of the random variable $X = f_\theta^{-1}(Z)$
can be evaluated \textit{exactly}:
\begin{align*}
    p(\bx) = t(f_\theta(\bx)) \left\vert \textrm{det} \frac{\partial f_\theta(\bx)}{\partial \bx}\right\vert
\end{align*}
where $f_\theta^{-1}: \mathcal{Z} \rightarrow \mathcal{X}$ denotes the inverse of the mapping $f_\theta$, 
$p$ denotes the probability density of the random variable $X$, and $t$ denotes the probability density of $Z$. 
The base (prior) distribution $t(\bz)$ is typically chosen to be an isotropic Gaussian $\mathcal{N}(0,I)$ -- the simplicity of evaluating this prior density, coupled with the tractability of $f_\theta^{-1}$ and its Jacobian, allows us to train the normalizing flow via maximum likelihood. 
The key property of normalizing flows that we exploit in our method is their \textit{invertibility}: the dimensionality of $\mathcal{X}$ and $\mathcal{Z}$ are the same by design, and data points can be losslessly mapped between the two spaces. As we will demonstrate in Section~\ref{method}, this will be critical for translating the density ratios obtained in latent space back to those in input space. 

\subsection{Density Ratio Estimation Techniques}
\label{existing-dre}
\paragraph{Notation and Problem Setup.}

We denote the input variable as $\bx \in \mathcal{X} \subseteq \mathbb{R}^d$, and let $\bz \in \mathcal{Z} \subseteq \mathbb{R}^d$ be a latent variable of the same dimensionality as the input. 
We use capital letters to denote random variables, e.g. $X_p \sim p$ and $X_q \sim q$.
Then, $Z_p = f_\theta(X_p)$ and $Z_q = f_\theta(X_q)$ denote the random variables obtained by transforming $X_p$ and $X_q$ with $f_\theta$.
We note that since $Z_p$ and $Z_q$ are transformed by the same normalizing flow $f_\theta$, we can form the mixture density $Z = \frac{1}{2}Z_p + \frac{1}{2}Z_q \sim t$, where $t(\bz) \sim \mathcal{N}(0,I)$.

The learning setting we consider is as follows. Given two sets of observed samples $\mathcal{D}_p = \{\bx_i^p\}_{i=1}^{n_p} \sim p(\bx)$ and $\mathcal{D}_q = \{\bx_j^q\}_{j=1}^{n_q} \sim q(\bx)$, we wish to estimate the ratio of their underlying probability densities $r(\bx) = p(\bx)/q(\bx)$. 
We focus on \textit{direct} ratio estimation techniques, where we learn the density ratio estimator $\hat{r}$ rather than constructing explicit \textit{density estimates} of $\hat{p}(\bx)$ and $\hat{q}(\bx)$ and computing their ratio \citep{sugiyama2012density}.
The estimator $\hat{r}$ is obtained via an 
estimation 
algorithm $\texttt{DRE}$ which takes as input two datasets and returns a function $\texttt{DRE}(\Dc_p, \Dc_q) = \hat{r}: \mathcal{X} \rightarrow \mathbb{R}$.
Then, evaluating $\hat{r}$ at a particular point $\bx$ gives us an estimate of the true density ratio $\hat{r}(\bx) \approx r(\bx)$. In the following exposition, we provide background information on the suite of existing density ratio estimation algorithms.


\begin{figure*}[!t]
    \centering
        \subfigure[Baseline classifier]{\includegraphics[width=.24\textwidth]{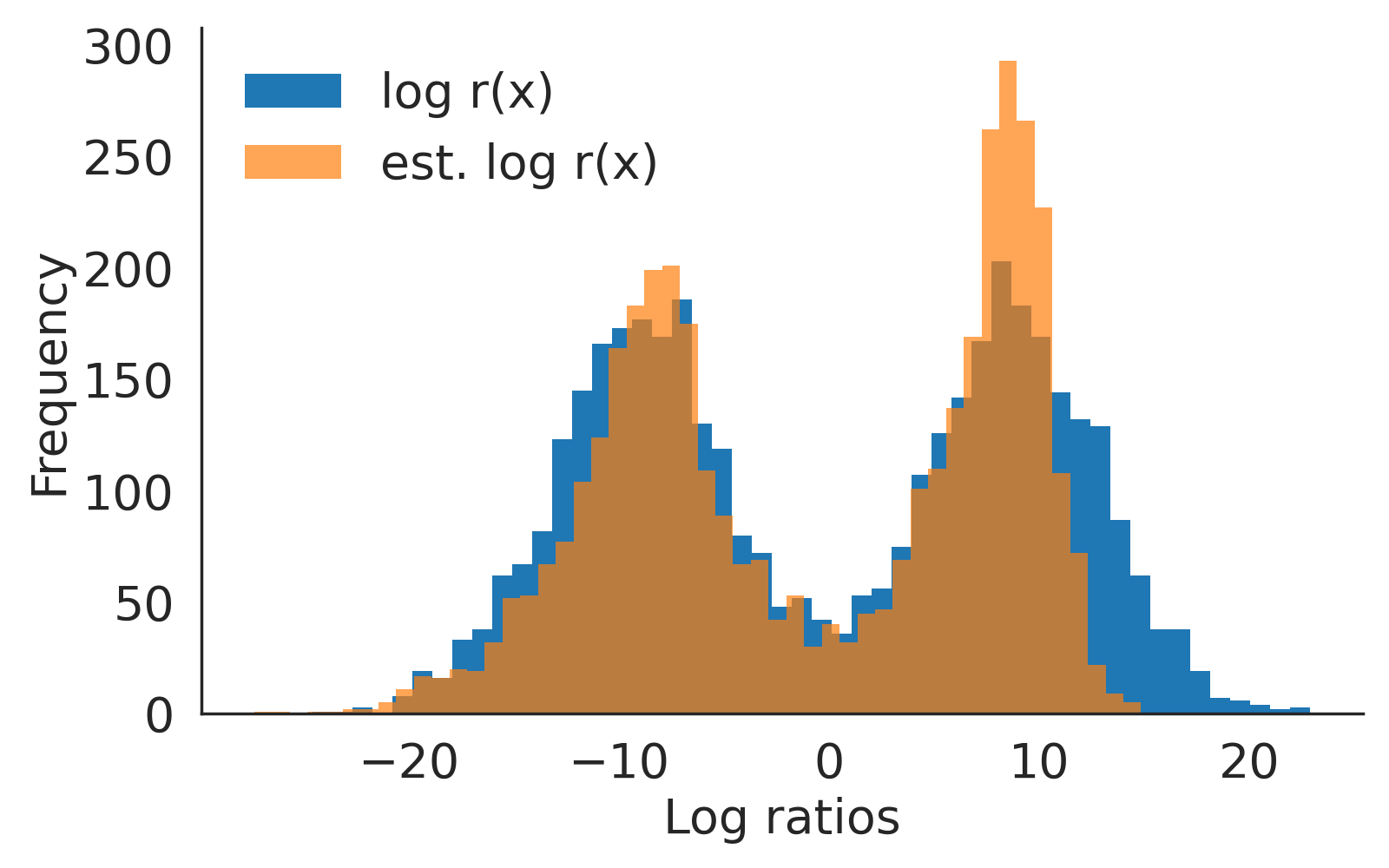}}
        \subfigure[Separate training (flow)]{\includegraphics[width=.25\textwidth]{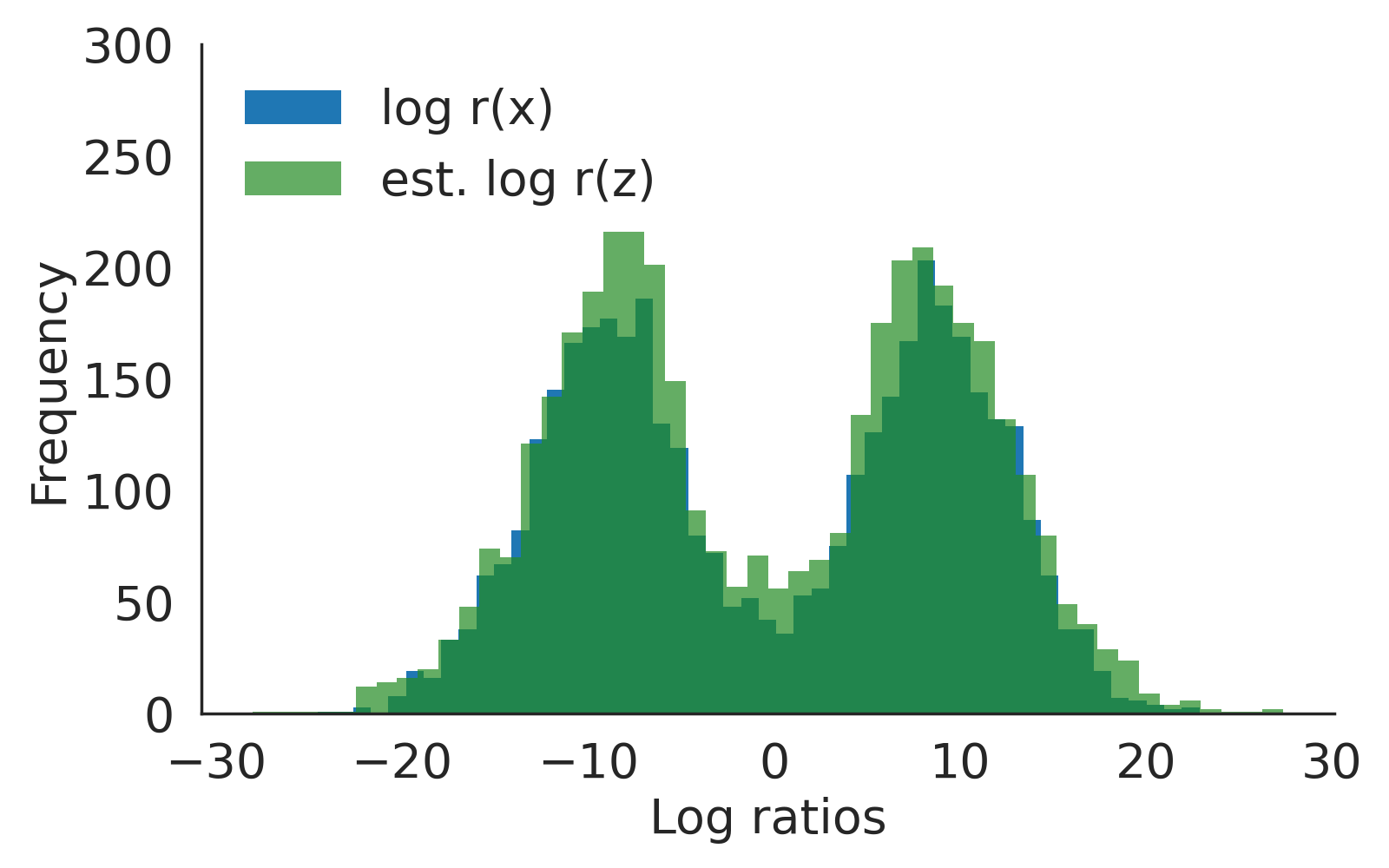}}
        \subfigure[Joint training ($\alpha=0.9$)]{\includegraphics[width=.24\textwidth]{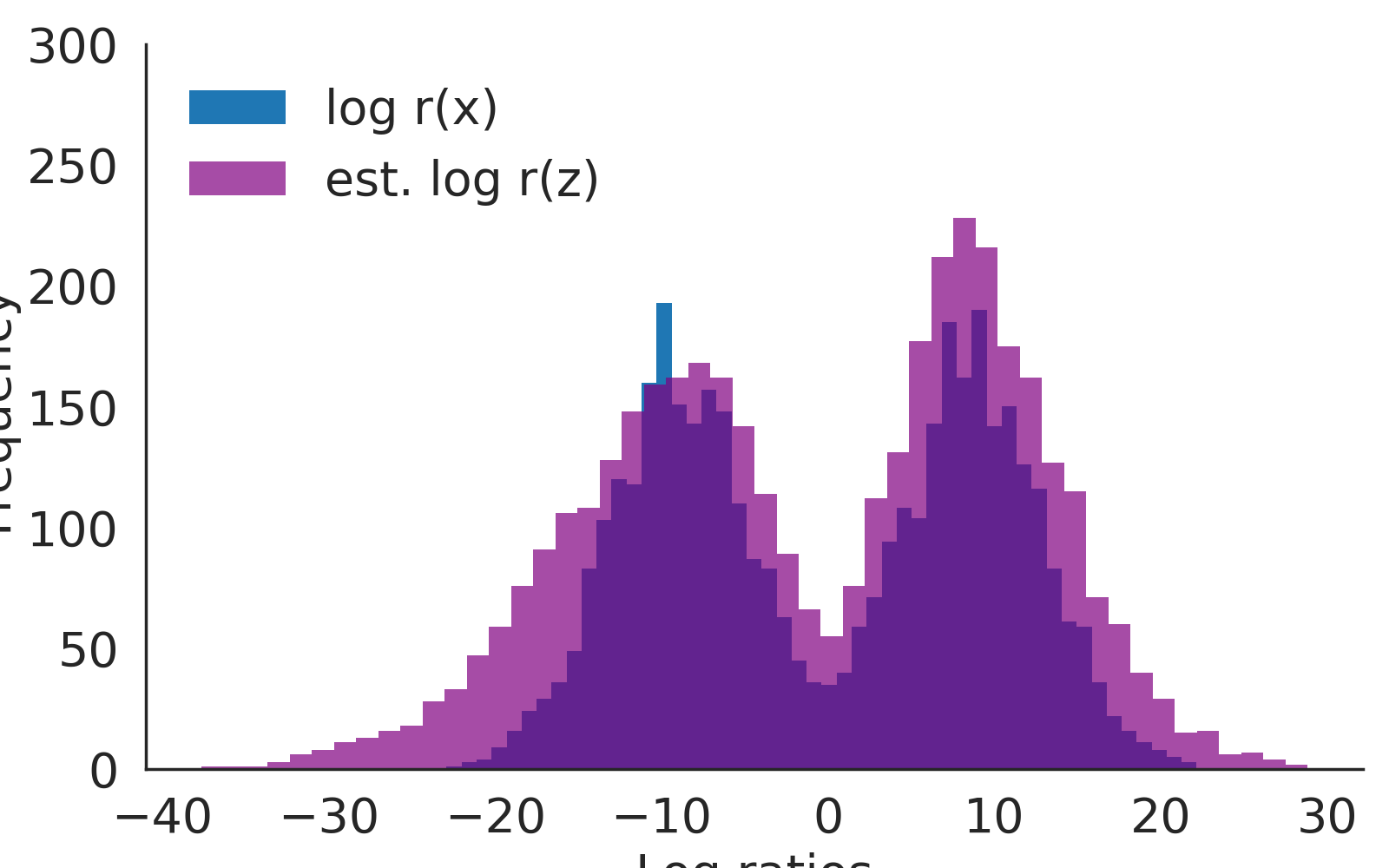}}
        \subfigure[Discriminative training]{\includegraphics[width=.25\textwidth]{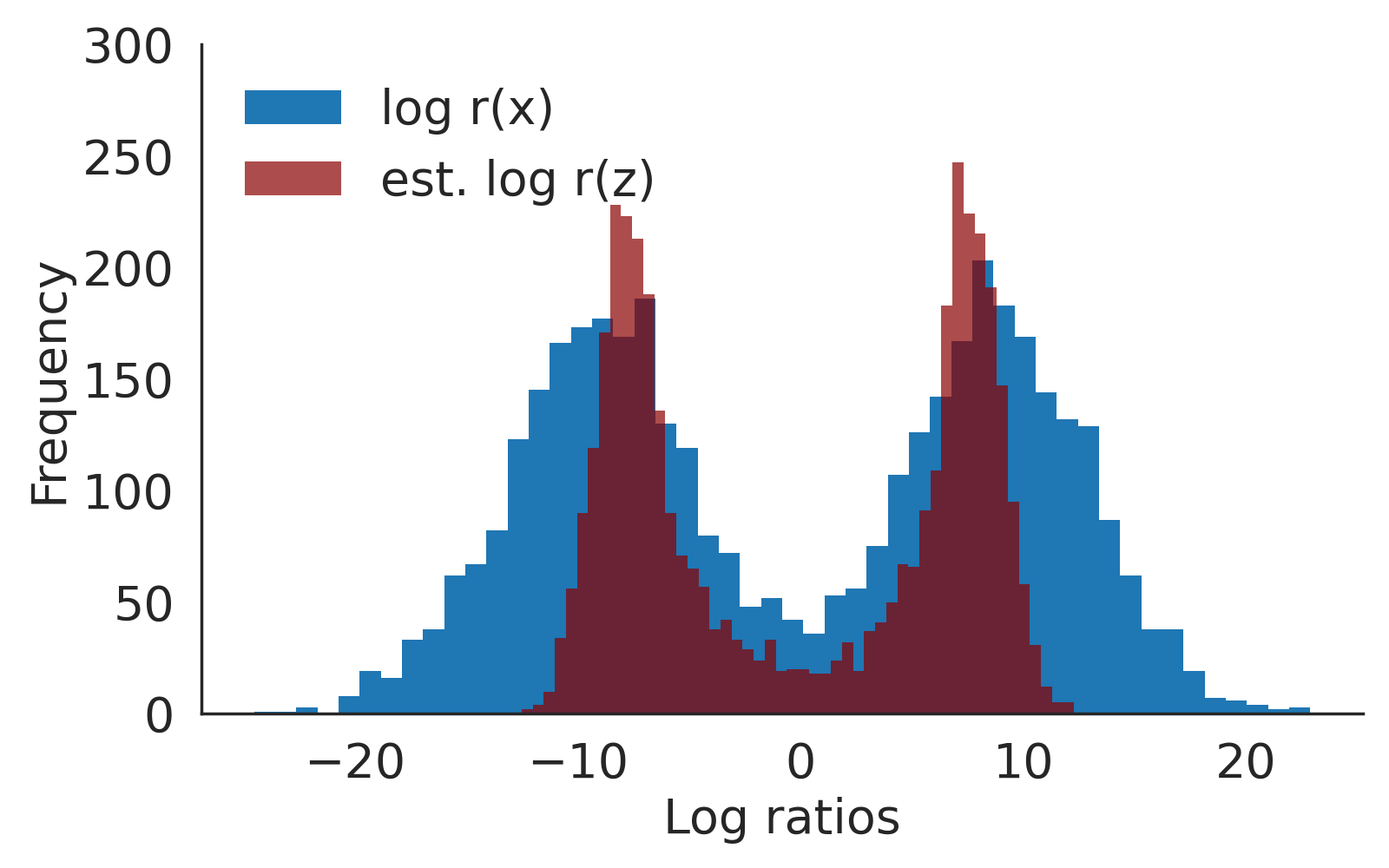}}
        \subfigure[Ground truth data]{\includegraphics[width=.25\textwidth]{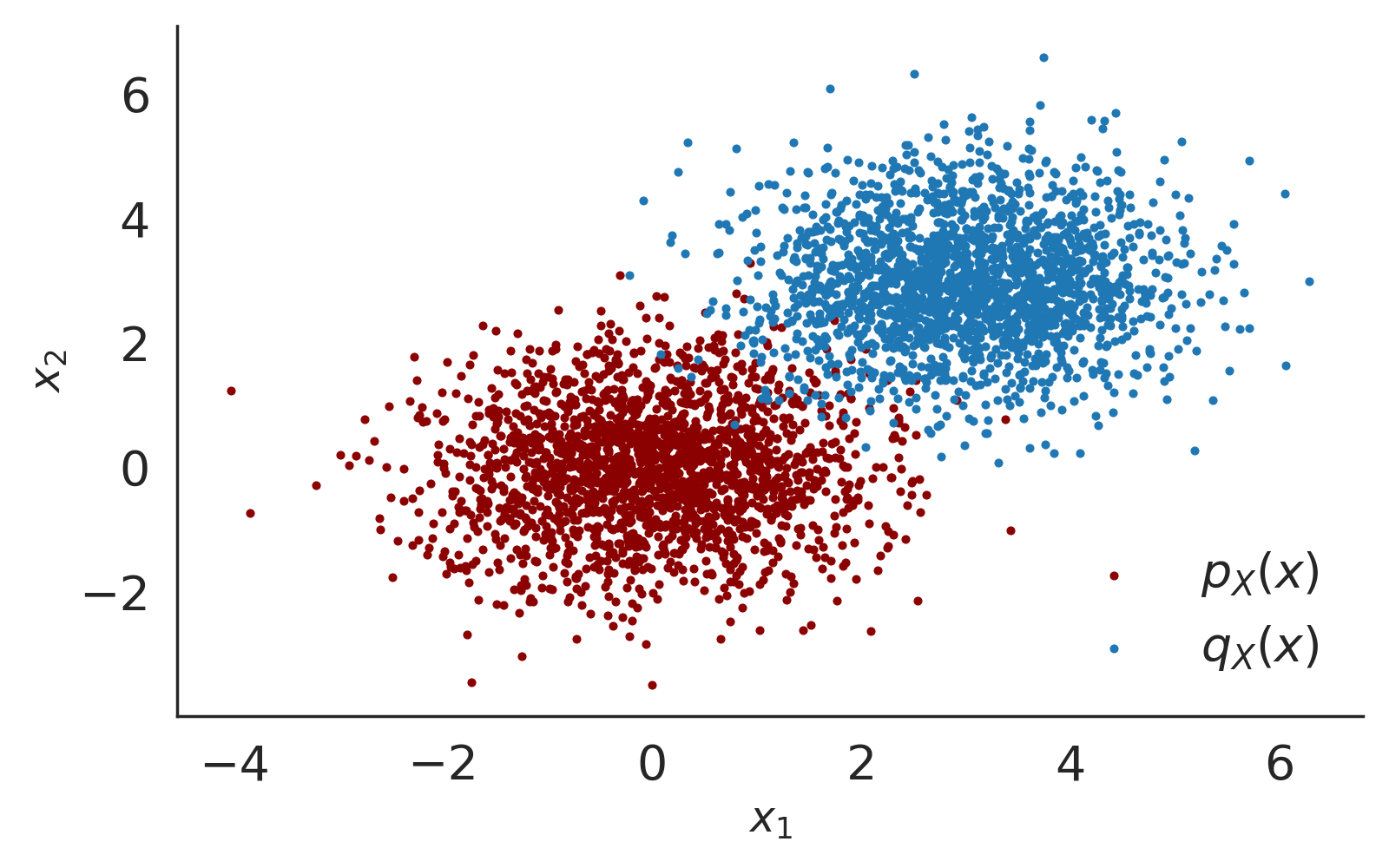}}
        \subfigure[Separate training (flow)]{\includegraphics[width=.25\textwidth]{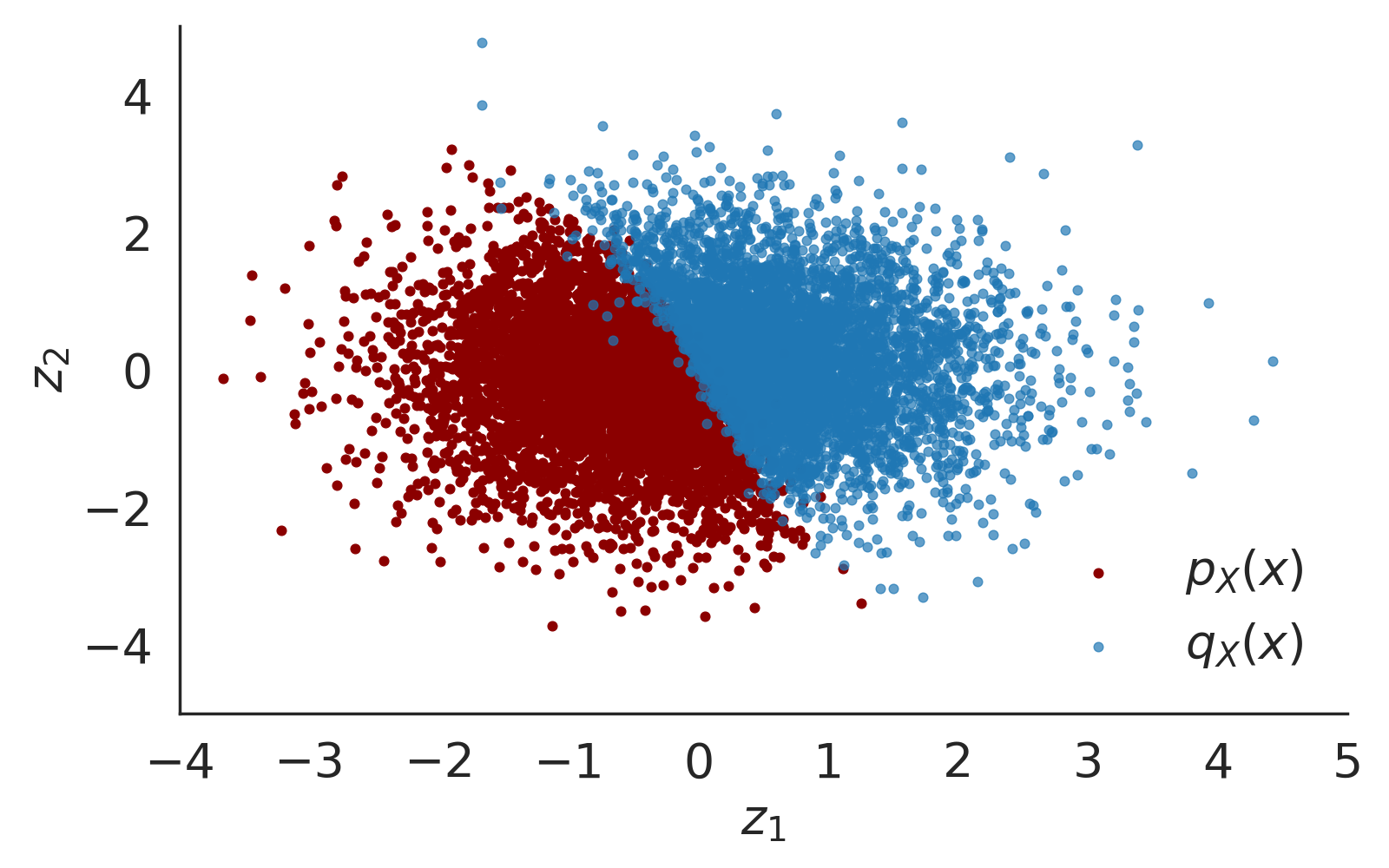}}
        \subfigure[Joint training ($\alpha=0.9$)]{\includegraphics[width=.24\textwidth]{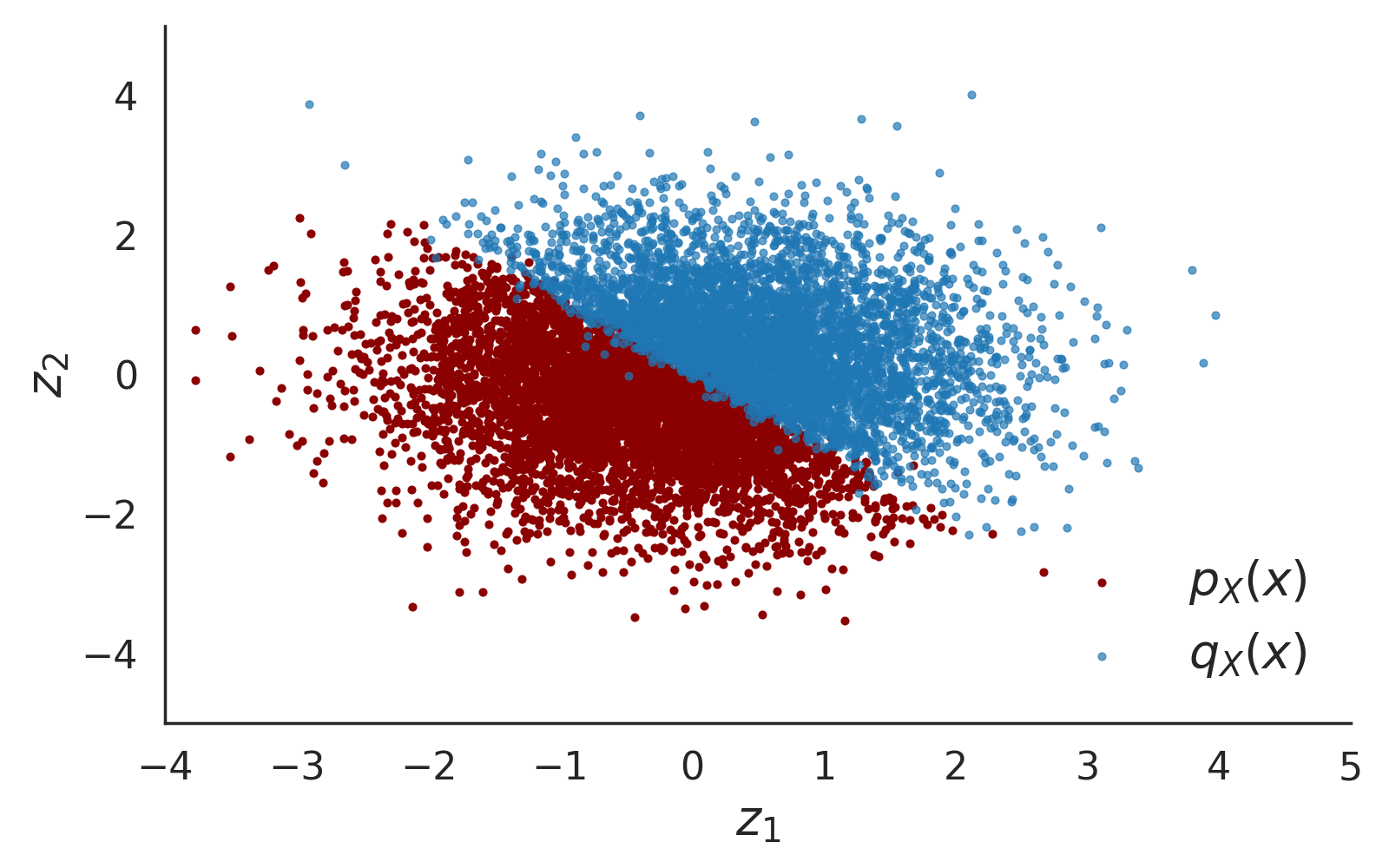}}
        \subfigure[Discriminative training]{\includegraphics[width=.24\textwidth]{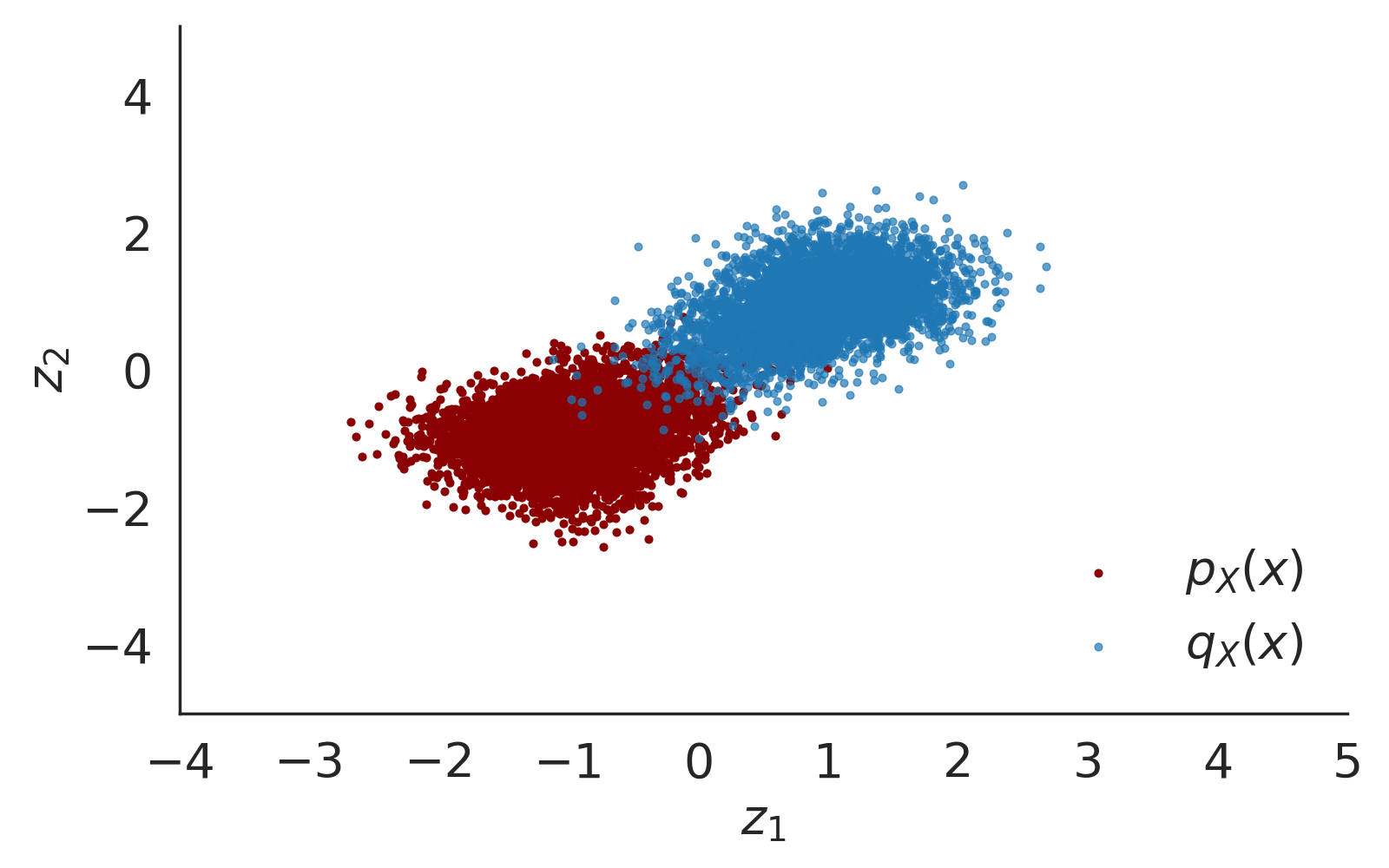}}
    \caption{\textit{Top row:} Motivating example on a synthetic 2-D Gaussian dataset, with learned density ratio estimates by method relative to the ground truth values for (a-d).
    \textit{Bottom row:} Visualizations of the learned encodings for various training strategies for (f-h), with ground truth samples from $p(\bx)$ and $q(\bx)$ in (e). We note that using a pretrained flow as an invertible encoder as in (b) leads to the most accurate density ratio estimates.}
    \label{fig:gmm_encodings}
\end{figure*}
\paragraph{Direct Ratio Estimation.}
From the wealth of alternative estimators for this task \citep{kanamori2009least,sugiyama2012bregman,vapnik2013constructive}, we outline two classical methods which perform density ratio estimation that benefit from featurization as per our framework: (1) Kernel Mean Matching (KMM) \citep{huang2006correcting,gretton2009covariate}, which draws inspiration from moment matching techniques, and (2) the Kullback-Leibler Importance Estimation Procedure (KLIEP) \citep{nguyen2007estimating,sugiyama2008direct}. 

For KMM, density ratio estimates are obtained by projecting all inputs into a 
reproducing kernel Hilbert space (RKHS) $\mathcal{H}$ induced by a characteristic kernel $k: \mathcal{X} \times \mathcal{X} \rightarrow \mathbb{R}$. Although several choices for the kernel are possible, \citeauthor{huang2006correcting} 
use the Gaussian kernel $k(\bx, \bx')=\exp(||\bx-\bx'||^2)$ to arrive at the following objective:
\begin{align*}
    \min_{\hat{r} \in \mathcal{H}}||\mathbb{E}_{q(\bx)}\left[k(\bx,\cdot)\hat{r}(\bx) \right] - \mathbb{E}_{p(\bx)}\left[k(\bx,\cdot)\right]||^2_{\mathcal{H}}
\end{align*}
where both expectations are approximated via Monte Carlo. Intuitively, KMM attempts to match the mean embedding of the two distributions (where the embedding is produced by the canonical feature map defined by $k$) in $\mathcal{H}$. 

For KLIEP, the goal is to estimate density ratios $r(\bx) = p(\bx)/q(\bx)$ such that the Kullback-Leibler (KL) divergence between $p(\bx)$ and $\hat{p}(\bx) = \hat{r}(\bx)q(\bx)$ is minimized:
\begin{align*}
    & \min_{\hat{r}(\bx)} \mathbb{E}_{p(\bx)} \left[ \log \frac{p(\bx)}{\hat{r}(\bx)q(\bx)} \right]\\
    & \textrm{s.t. } \int \hat{r}(\bx) q(\bx) d\bx = 1
\end{align*}
The solution to this constrained optimization problem can also be obtained in $\mathcal{H}$ by parameterizing $\hat{r}_\theta(\bx) = \sum_{i=1}^{n_p} \theta_i k(\bx, \bx_i^p)$ for $\bx^p \sim p(\bx)$ and some kernel $k$, similar in spirit to KMM. 

\paragraph{Probabilistic Classification.} 
Another technique to obtain density ratio estimates is via probabilistic classification, in which a binary classifier $c_\phi: \mathcal{X} \rightarrow [0,1]$ is trained to discriminate between samples from the two densities $p(\bx)$ and $q(\bx)$ \citep{friedman2001elements,gutmann2010noise,sugiyama2012density}. Concretely, suppose we construct a dataset such that all samples $\mathcal{D}_p$ are given the pseudolabel $y=1$, and those from $\mathcal{D}_q$ are labeled as $y=0$. Assuming that the two datasets are equal in size ($n_p = n_q$, though this can be relaxed with a scaling factor), we can use Bayes' rule to arrive at the following expression for the density ratio:
\begin{align*}
    r(\bx) = \frac{p(\bx)}{q(\bx)} = \frac{p(\bx|y=1)}{q(\bx|y=0)} =  \frac{c^*_\phi(\bx)}{1-c_\phi^*(\bx)}
\end{align*}
where $c^*_\phi(\bx) = P(y=1|\bx)$ denotes the Bayes optimal classifier for this particular task.



\section{Featurized Density Ratio Estimation}
\label{method}
\subsection{Motivating Example and Intuition}\label{intuition_ex}
Despite the suite of existing techniques for density ratio estimation, they are of limited use
in settings where $p(\bx)$ and $q(\bx)$ are mismatched in support \citep{cortes2010learning,yamada2013relative,you2019towards,rhodes2020telescoping}
We highlight an illustrative failure case in Figure~\ref{fig:gmm_encodings} on a 2-dimensional toy dataset, where $p(\bx) \sim \mathcal{N}([0, 0]^T,I)$ and $q(\bx) \sim \mathcal{N}([3, 3]^T,I)$. As shown in Figure~\ref{fig:gmm_encodings}(e), the two random variables have regions of minimal overlap -- when training a binary classifier $c_\phi$ to distinguish the two sets of samples, the log-ratio estimates $\log \hat{r}_\phi(\bx)$ learned by the classifier are noticeably inaccurate (Figure~\ref{fig:gmm_encodings}(a)). 

To develop a solution, we consider a simple example to build intuition about the problem (with more details in Appendix~\ref{intuition}). Suppose we want to estimate the density ratios between two 1-dimensional Gaussians, $p \sim \mathcal{N}(m,1)$ and $q \sim \mathcal{N}(-m,1)$, with a finite number of samples $\Dc_p =\{x_i^p\}_{i=1}^n$ and $\Dc_q =\{x_i^q\}_{i=1}^n$ of size $n$ from each. The analytic solution for $r(x) = p(x)/q(x) = \exp \left(-\frac{(x-m)^2 - (x+m)^2}{2} \right) = \exp(2m x)$ for $x\in \mathbb{R}$, which grows exponentially with $m$. 
Without access to the parametric forms of $p$ and $q$, we train a logistic regression model $c_\phi$ to discriminate between $\Dc_p$ and $\Dc_q$, where the maximum likelihood objective is:
\[
\max_{w_0, w_1} \mathbb{E}_p[\log\sigma(w_0 + w_1 \cdot x)] + \mathbb{E}_q[\log  \sigma(-w_0 - w_1 \cdot x)]
\]
where $\sigma(z) = 1/(1 + \exp(-z))$.
Although the logistic regression model is well-specified in this setting, and can achieve Bayes optimal risk in the limit of infinite data, we illustrate what can go wrong in the finite sample regime.

Suppose that $m > 0$ is large -- there exists a large separation between $p$ and $q$. Then, most samples $\Dc_p \sim p$ will take on positive values, and most samples $\Dc_q \sim q$ will be negative. In this situation, the model will be incentivized to push $w_1 \rightarrow \infty$ to maximize the objective. This will lead to wildly inaccurate density ratio estimates, as we know that the true values of $w_1 = 2m$ and $w_0 = 0$ are far from infinity (in fact, $r(x) = \exp(w_1 \cdot x))$.
Thus we must see samples \textit{between} $p$ and $q$ during training: concretely, samples from $p$ such that $x_p \leq 0$ and samples from $q$ such that $x_q \geq 0$. 
But with $n$ samples from $p$, the probability that $\Dc_p$ contains all positive samples is 
$\prod_{i=1}^n P(X_p > 0) \geq (1 - n \exp(-m^2/2))$,
which means that the number of samples required to avoid pathological solutions is exponential in $m^2$.
This implies that density ratio estimation via probabilistic classification in input space is near impossible in such scenarios without extremely large amounts of training data.

\subsection{Methodology}\label{method}
The motivating example in the previous section suggests that it is critical to bring $p$ and $q$ ``closer together" to make the density ratio estimation problem tractable. Our solution is to do so in \textit{latent space} by leveraging an invertible transformation. Concretely, we consider training an invertible deep generative model $f_\theta$ on a \textit{mixture} of $p(\bx)$ and $q(\bx)$, such that $f_\theta(X_p)$ and $f_\theta(X_q)$ are mapped to a common feature space $\mathcal{Z}$. 
The result of utilizing $f_\theta$ as invertible feature map can be visualized in Figure~\ref{fig:gmm_encodings}(f): the flow compresses all data points to lie in different regions of a unit Gaussian ball.
By mapping regions of low density in $\mathcal{X}$ into regions of higher density in $\mathcal{Z}$, and training our probabilistic classifier $c_\phi$ on $f_\theta(\bx) \in \mathcal{Z}$ rather than $\bx \in \mathcal{X}$ directly, this contraction leads to learning more accurate density ratios as shown in Figure~\ref{fig:gmm_encodings}(b). We refer the reader to Appendix~\ref{addtl-results} for additional experimental details and results. 


Our first observation is that as a direct consequence of the invertibility of $f_\theta$, the density ratios obtained in feature space are equivalent to those obtained in input space. We formalize this statement in Lemma~\ref{lemma:cvf} below.
\begin{lemma}\label{lemma:cvf}
Let $X_p \sim p$ be a random variable with density $p$, and $X_q \sim q$ be a random variable with density $q$. Let $f_\theta$ be any invertible mapping. Let $p',q'$ be the densities of $Z_p = f_\theta(X_p)$ and $Z_q = f_\theta(X_q)$ respectively.
Then for any $\bx$:
\[
\frac{p(\bx)}{q(\bx)} =  \frac{p'(f_\theta(\bx))}{q'(f_\theta(\bx))}
\]
\end{lemma}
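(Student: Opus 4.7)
The plan is to apply the change of variables formula for probability densities to both $p'$ and $q'$, and then observe that the Jacobian factors cancel in the ratio. Since $f_\theta$ is invertible (and in context a diffeomorphism, as normalizing flows are by construction), standard results give us an explicit formula for the pushforward density that only involves the original density evaluated at $f_\theta^{-1}(\bz)$ together with a Jacobian determinant.

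Concretely, I would first invoke the change of variables formula to write, for any point $\bz$ in the image,
\[
p'(\bz) = p(f_\theta^{-1}(\bz)) \left| \det \frac{\partial f_\theta^{-1}(\bz)}{\partial \bz}\right|,
\qquad
q'(\bz) = q(f_\theta^{-1}(\bz)) \left| \det \frac{\partial f_\theta^{-1}(\bz)}{\partial \bz}\right|.
\]
Next I would specialize to $\bz = f_\theta(\bx)$, so that $f_\theta^{-1}(\bz) = \bx$, yielding
\[
p'(f_\theta(\bx)) = p(\bx)\, J(\bx), \qquad q'(f_\theta(\bx)) = q(\bx)\, J(\bx),
\]
where $J(\bx) = \left| \det \partial f_\theta^{-1}(\bz)/\partial \bz \right|$ evaluated at $\bz = f_\theta(\bx)$.

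The crucial observation is that $J(\bx)$ depends only on the map $f_\theta$ and the point $\bx$, not on which distribution generated $\bx$. Therefore, forming the ratio $p'(f_\theta(\bx))/q'(f_\theta(\bx))$ causes $J(\bx)$ to cancel (whenever $q(\bx) > 0$, so the ratio is well-defined), leaving exactly $p(\bx)/q(\bx)$, which is the claimed equality.

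There is no real obstacle here: the only subtlety to be mindful of is regularity, namely that $f_\theta$ is a differentiable bijection with differentiable inverse so that the change of variables formula applies pointwise and the Jacobian determinant is nonzero. This is guaranteed by the architectural design of normalizing flows described in the preliminaries, so it suffices to cite or assume the standard change of variables result and let the Jacobian cancellation finish the argument.
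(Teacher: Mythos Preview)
Your proposal is correct and matches the paper's own proof essentially line for line: the paper likewise invokes the change of variables formula, multiplies numerator and denominator by the Jacobian $\left|\partial f_\theta^{-1}(\bt)/\partial \bt\right|$ evaluated at $\bt = f_\theta(\bx)$, and identifies the resulting expressions with $p'(f_\theta(\bx))$ and $q'(f_\theta(\bx))$. Your version is slightly more explicit about the regularity assumptions, but the argument is the same.
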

\begin{proof}
We provide the proof in Appendix~\ref{lemma-proof}.
\end{proof}
This simple observation is quite powerful, as it lends us a general-purpose algorithm that may improve many existing ratio estimation techniques as a black-box wrapper. We provide the pseudocode for our training procedure in Algorithm~\ref{alg:fdre}. Given the two sets of samples, the ratio estimation method, and an invertible generative model family, we first train the normalizing flow on a mixture of the two datasets (Lines 2-3). We then use the trained flow to encode the samples into a common feature space and plug them into the base density ratio estimator algorithm $\texttt{DRE}(\cdot)$ to obtain our featurized density ratio estimator $\hat{r} \circ f_\theta^*$, which is implicitly composed with the trained normalizing flow (Line 6). This algorithm allows us to lightly modify existing approaches such as KMM and KLIEP as detailed in Appendix~\ref{dre_details}, and we explore their featurized variants in our experiments.
\begin{algorithm}[!ht]
  \caption{Featurized Density Ratio Estimation}
  \label{alg:fdre}
  \textbf{Input:} Datasets $\mathcal{D}_p$ and $\mathcal{D}_q$, Density Ratio Estimation Algorithm $\texttt{DRE}$, Invertible Generative Model Family $\{f_\theta, \theta \in \Theta\}$\\
  \textbf{Output:} Featurized Density Ratio Estimator $\hat{r} \circ f_\theta^*$\\
\begin{algorithmic}[1]
\STATE $\triangleright$ Phase 1: Train invertible generative model
  \STATE Concatenate datasets $\mathcal{D} = \{\mathcal{D}_p, \mathcal{D}_q\}$
  \STATE \label{line:flow}Train $f_{\theta^*}$ on $\mathcal{D}$ via maximum likelihood
 \STATE \label{line:test} $\triangleright$ Phase 2: Obtain density ratio estimator
 \STATE $\hat{r} = \texttt{DRE}(f_{\theta^*}(\Dc_p), f_{\theta^*}(\Dc_q))$
 \STATE {\bfseries return} $\hat{r} \circ f_\theta^*$
\end{algorithmic}
\end{algorithm}

\subsection{Training Procedure}
\label{train_methods}
In practice, there are a variety of ways to implement the training procedure as outlined in Algorithm~\ref{alg:fdre}. The most general is \textbf{separate training}, which leverages a pre-trained flow $f_\theta$ as an invertible encoder to map the inputs into a common feature space, prior to ratio estimation. This approach is capable of handling all parametric and non-parametric techniques which operate directly in input space.

In the probabilistic classification setting, where the density ratio estimation algorithm $\texttt{DRE}(\cdot)$ requires learning a binary classifier $c_\phi$ to distinguish between $\Dc_p$ and $\Dc_q$, we can \textit{adapt} the normalizing flow $f_\theta$ to account for the known structure of $c_\phi$. We call this procedure \textbf{joint training}. Both the normalizing flow $f_\theta$ and the discriminative classifier $c_\phi$ are trained jointly via the following objective:
\begin{equation} \label{joint_obj}
    \mathcal{L}_{\textrm{joint}}(\theta,\phi) = \alpha \mathcal{L}_{\textrm{sup}}(\theta,\phi) + (1-\alpha) \mathcal{L}_{\textrm{flow}}(\theta)
\end{equation}
where $\mathcal{L}_{\textrm{sup}}$ denotes the standard binary cross entropy (logistic) loss, $\mathcal{L}_{\textrm{flow}}$ denotes the maximum likelihood objective for the flow $f_\theta$, and $\alpha \in [0,1]$ is a hyperparameter which balances the importance of the two terms in the loss function. This approach is quite common in learning deep hybrid models \citep{kuleshov2017deep,nalisnick2019hybrid}. 

 Finally, we explore \textbf{discriminative training}, where we modify the classifier $c_\phi$'s architecture to incorporate that of the flow $f_\theta$
to build an ``invertible" classifier $c_{\phi,\theta}: \mathcal{X} \rightarrow [0,1]$ that is trained solely via the logistic loss $\mathcal{L}_\textrm{sup}(\theta,\phi)$. This is inspired by the strong performance of invertible networks such as i-RevNet \citep{jacobsen2018revnet}, i-ResNet \citep{behrmann2019invertible}, and Mintnet \citep{song2019mintnet} on downstream classification tasks.

\subsection{Characterization of the Learned Feature Space}
At a first glance, Lemma~\ref{lemma:cvf} appears to suggest that \textit{any} feature space induced by an invertible map should work well for density ratio estimation, as long as $f_\theta(X_p)$ and $f_\theta(X_q)$ are closer together than $X_p$ and $X_q$.
To gain further insight into the desirable characteristics of the learned feature space, we visualize the encodings of the various training strategies in Figure~\ref{fig:gmm_encodings}. 
For both a pretrained (Figure~\ref{fig:gmm_encodings}(f)) and jointly trained (Figure~\ref{fig:gmm_encodings}(g)) normalizing flow, the data points are mapped to lie closer together in different regions of the unit Gaussian ball. However, for the discriminatively trained classifier equipped with an invertible ``encoder" (Figure~\ref{fig:gmm_encodings}(h)), the encoded examples more closely resemble the shape of the original inputs (Figure~\ref{fig:gmm_encodings}(e)).
This observation, combined with the low quality density ratio estimates in Figure~\ref{fig:gmm_encodings}(d) relative to the other training methods (Figure~\ref{fig:gmm_encodings}(b-c)), suggests that maximum likelihood training of the normalizing flow $f_\theta$ \textit{in addition to} shrinking the gap between the densities $p$ and $q$ is crucial for obtaining accurate density ratios in feature space. We hypothesize that mapping the observations into a unit Gaussian ball is an important property of our method, and we save an in-depth theoretical analysis of this phenomenon for future work. 
\section{Theoretical Analysis}
\label{theory}
In this section, we provide theoretical justifications for several properties of the featurized density ratio estimator. As a consequence of Lemma~\ref{lemma:cvf}, we find that our estimator inherits many of the desirable properties of the original estimator.

\subsection{Properties of the Estimator}
\paragraph{Unbiasedness.} Unbiasedness is one of the most fundamental desiderata of a statistical estimator, as it guarantees that the estimated parameter is equivalent to the parameter's true value in expectation. In Corollary~\ref{lemma:cvf:unbiased}, we prove that unbiasedness of the featurized ratio estimator follows directly if the original estimator is also unbiased. 
\begin{corollary}\label{lemma:cvf:unbiased}
Let $\Dc_p$ be $n$ i.i.d samples from density $p$, and $\Dc_q$ be $n$ i.i.d samples from density $q$.
Let $\hat{r}(\bx)$ obtained from $\hat{r} = \dre \left(\Dc_p,\Dc_q \right)$
be an unbiased estimator of $r(\bx) = \frac{p(\bx)}{q(\bx)}$ and any $p,q$, and let $f_\theta$ denote any invertible mapping. 
Then,
$(\hat{r}' \circ f_\theta)(\bx)$ obtained from $\hat{r}' = \dre \left(f_\theta(\Dc_p), f_\theta(\Dc_q) \right)$
is also an unbiased estimator of $\frac{p(\bx)}{q(\bx)}$ for any $p,q$.
\end{corollary}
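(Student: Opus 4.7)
The plan is to chain Lemma~\ref{lemma:cvf} with the unbiasedness hypothesis applied to the push-forward densities $p'$ and $q'$. First, I would note that since $f_\theta$ is a fixed, deterministic invertible map and $\Dc_p = \{\bx_i^p\}_{i=1}^n$ consists of $n$ i.i.d. samples from $p$, the transformed set $f_\theta(\Dc_p) = \{f_\theta(\bx_i^p)\}_{i=1}^n$ consists of $n$ i.i.d.\ samples from the push-forward density $p'$ (the density of $Z_p = f_\theta(X_p)$, given by the change-of-variables formula). The same holds for $f_\theta(\Dc_q)$ with respect to $q'$. This step just requires invoking the definition of push-forward and the preservation of independence under a common deterministic transformation.

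Next, I would use the assumed unbiasedness of $\dre$: because the assumption holds \emph{for any} pair of densities, it in particular applies to the pair $(p', q')$. Writing the expectation over the random draws of the input datasets, this yields
\begin{equation*}
\mathbb{E}_{f_\theta(\Dc_p),\,f_\theta(\Dc_q)}\!\left[\hat{r}'(\bz)\right] \;=\; \frac{p'(\bz)}{q'(\bz)} \qquad \text{for every } \bz.
\end{equation*}
Evaluating at the specific point $\bz = f_\theta(\bx)$ gives $\mathbb{E}[\hat{r}'(f_\theta(\bx))] = p'(f_\theta(\bx))/q'(f_\theta(\bx))$. Since the randomness on the left-hand side is only over $\Dc_p$ and $\Dc_q$ (the map $f_\theta$ is deterministic), the expectation is equivalently taken over the original datasets.

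Finally, I would apply Lemma~\ref{lemma:cvf}, which states that the ratio is invariant under invertible transformations, i.e.\ $p'(f_\theta(\bx))/q'(f_\theta(\bx)) = p(\bx)/q(\bx)$. Combining these two equalities yields
\begin{equation*}
\mathbb{E}_{\Dc_p,\,\Dc_q}\!\left[(\hat{r}' \circ f_\theta)(\bx)\right] \;=\; \frac{p(\bx)}{q(\bx)},
\end{equation*}
which is exactly the unbiasedness claim for the featurized estimator.

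I do not anticipate a serious obstacle here: the argument is essentially a two-line consequence of Lemma~\ref{lemma:cvf} plus the ``for any $p,q$'' strength of the unbiasedness assumption. The only point worth stating carefully is the justification that $f_\theta(\Dc_p)$ and $f_\theta(\Dc_q)$ remain i.i.d.\ samples from the push-forward densities, so that the hypothesis of the base estimator's unbiasedness is legitimately applicable to the transformed inputs. Everything else is bookkeeping.
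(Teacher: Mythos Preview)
Your proposal is correct and follows essentially the same route as the paper's own proof: apply the unbiasedness hypothesis to the push-forward pair $(p',q')$, evaluate at $\bz=f_\theta(\bx)$, and invoke Lemma~\ref{lemma:cvf} to identify $p'(f_\theta(\bx))/q'(f_\theta(\bx))$ with $p(\bx)/q(\bx)$. If anything, your version is slightly more careful than the paper's in explicitly justifying that $f_\theta(\Dc_p)$ and $f_\theta(\Dc_q)$ remain i.i.d.\ samples from $p'$ and $q'$, which the paper leaves implicit.
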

\begin{proof}
We provide the proof in Appendix~\ref{unbiased-supp}.
\end{proof}

\paragraph{Consistency.} Consistency is another key property in a statistical estimator, as it guarantees that in the limit of infinite data used in the estimation procedure, the probability that the estimator becomes arbitrarily close to the true parameter converges to one. 
We prove in Corollary~\ref{lemma:cvf:consistent} that consistency of the featurized density ratio estimator also follows if the original density ratio estimator is consistent. This is desirable, as estimators such as the KLIEP and KMM (with universal kernels) are both consistent \citep{huang2006correcting,gretton2009covariate,sugiyama2012density}. 

\begin{corollary}\label{lemma:cvf:consistent}
Let $\Dc_p$ be $n$ i.i.d samples from density $p$, and $\Dc_q$ be $n$ i.i.d samples from density $q$. Let $\hat{r}(\bx)$ obtained from $\hat{r} = \dre(\Dc_p,\Dc_q)$
be a consistent estimator of $\frac{p(\bx)}{q(\bx)}$ for all $\bx \in \mathcal{X}$ and for any $p,q$. Let $f_\theta$ be any invertible mapping. Then, $(\hat{r}' \circ f_\theta)(\bx)$ obtained from $\hat{r}' = \dre \left(f_\theta(\Dc_p), f_\theta(\Dc_q)\right)$
is also a consistent estimator of $\frac{p(\bx)}{q(\bx)}$ for any $p,q$.
\end{corollary}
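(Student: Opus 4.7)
The plan is to mirror the proof of unbiasedness (Corollary~\ref{lemma:cvf:unbiased}) but with convergence in probability in place of equality in expectation, leveraging both Lemma~\ref{lemma:cvf} and the fact that the consistency hypothesis is assumed to hold \emph{for any} pair of densities $p,q$.

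First I would note that since $f_\theta$ is a fixed invertible mapping and $\Dc_p = \{\bx_i^p\}_{i=1}^n$ is i.i.d.\ from $p$, the transformed dataset $f_\theta(\Dc_p) = \{f_\theta(\bx_i^p)\}_{i=1}^n$ is i.i.d.\ from the pushforward density $p'$, and analogously $f_\theta(\Dc_q)$ is i.i.d.\ from $q'$. Thus $\hat{r}' = \dre\!\left(f_\theta(\Dc_p), f_\theta(\Dc_q)\right)$ is exactly an instance of the base estimator applied to $n$ i.i.d.\ samples from each of the densities $p'$ and $q'$.

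Second, I would invoke the consistency hypothesis at the specific pair $(p',q')$: for every fixed $\bz \in \Zc$,
\[
\hat{r}'(\bz) \;\xrightarrow{\;p\;}\; \frac{p'(\bz)}{q'(\bz)} \quad \text{as } n\to\infty.
\]
Instantiating this at $\bz = f_\theta(\bx)$ (a deterministic point, since $f_\theta$ is fixed and $\bx$ is arbitrary but fixed), we obtain $(\hat{r}'\circ f_\theta)(\bx) \xrightarrow{p} p'(f_\theta(\bx))/q'(f_\theta(\bx))$. Finally I would apply Lemma~\ref{lemma:cvf} to rewrite this pointwise limit as $p(\bx)/q(\bx)$, which yields the desired consistency of $(\hat{r}'\circ f_\theta)(\bx)$ at every $\bx \in \Xc$.

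There is no real obstacle here — the argument is essentially bookkeeping — but the one subtlety to be explicit about is the role of the ``for any $p,q$'' quantifier in the hypothesis: without it, we would only know that $\dre$ is consistent on the original pair, whereas we need to apply it to the pushforward pair $(p',q')$, which is generally a different pair of densities. A secondary care point is that we are using the continuous-mapping-style fact that if $Y_n \xrightarrow{p} c$ for a deterministic $c$, then the same convergence holds when we evaluate at any fixed point; this is immediate because $\bx$ is fixed before the limit is taken, so $f_\theta(\bx)$ is a constant and no uniform-convergence argument is required.
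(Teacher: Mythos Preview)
Your proposal is correct and follows essentially the same route as the paper: invoke the ``for any $p,q$'' consistency hypothesis at the pushforward pair $(p',q')$, evaluate the resulting pointwise convergence at $\bz = f_\theta(\bx)$, and then apply Lemma~\ref{lemma:cvf} to identify the limit with $p(\bx)/q(\bx)$. Your write-up is slightly more explicit than the paper's about why the transformed datasets are i.i.d.\ from $p',q'$ and about the role of the universal quantifier, but the argument is the same.
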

\begin{proof}
We provide the proof in Appendix~\ref{consistency-supp}.
\end{proof}





\section{Experimental Results}
\label{experiments}
In this section, we are interested in empirically investigating the following questions:
\begin{enumerate}
    \item Are the density ratios learned in feature space indeed more accurate than those learned in input space?
    \item Do estimates in feature space yield better performance on downstream tasks that rely on  density ratios?
\end{enumerate}
For conciseness, we report the average over several runs for all experiments and report complete results in Appendix~\ref{addtl-results}. 

\textbf{Datasets.}
We evaluate the efficacy of featurized density ratio estimation on both synthetic and real-world datasets. The synthetic experiments include toy examples on Gaussian mixtures of varying dimensionality (see Appendix~\ref{kernel_synth}), as well as datasets from the UCI Machine Learning Repository \citep{Dua:2019}. For more challenging scenarios, we consider MNIST \citep{lecun1998mnist} and Omniglot \citep{lake2015human}. Additional details on the dataset construction for all experiments can be found in Appendix~\ref{exp_details}.

\textbf{Models.}
We train different classifiers depending on the difficulty of the classification task, but largely keep the same architecture (either an MLP or CNN) across different tasks. 
For the normalizing flow, we utilize the Masked Autoregressive Flow (MAF) for all datasets \citep{papamakarios2017masked}. We train the MAF separately on the mixture of the two datasets prior to density ratio estimation for all experiments with the exception of the MI estimation experiment in Section~\ref{mi_exp}, where we explore various training strategies mentioned in Section~\ref{train_methods}.
For additional details regarding architecture design and relevant hyperparameters, we refer the reader to Appendix~\ref{appendix_arch}. 

\subsection{Domain Adaptation} \label{synth_exp}
We first pair our method with two existing techniques, KMM and KLIEP, to assess whether estimating ratios in feature space improves performance on domain adaptation tasks with: 1) 2-D Gaussian mixtures and 2) the UCI Breast Cancer dataset. On the synthetic dataset, our method achieves a lower test error than both baseline logistic regression (without importance weighting) and reweighted logistic regression using density ratios estimated by KMM and KLIEP in input space. See Appendix~\ref{kernel_synth} for full results.

The UCI Breast Cancer dataset consists of $n=699$ examples from 2 classes: benign ($y=1$) and malignant ($y=-1$), where each sample is a vector of $9$ features. We replicate the experimental setup of \citep{huang2006correcting} to construct a source dataset with a heavily downsampled number of benign labels, while leaving the target dataset as is. 
After learning the importance weights via density ratio estimation on a mixture of the source and (unlabeled) target datasets, 
we train a support vector machine (SVM) with a Gaussian kernel of bandwidth $\sigma = 0.1$ and varying penalty hyperparameter values $C = \{0.1, 1, 10, 100\}$ with importance weighting on the source domain. The binary classifier is then tested on the target domain. As shown in Figure~\ref{fig:kmm_uci}, when applied to KMM, for nearly all values of $C$, our method (\texttt{z-dre}) achieves the lowest test error on the target dataset compared to both a vanilla SVM (\texttt{baseline}) and a reweighted SVM with density ratio estimates computed in input space (\texttt{x-dre}). Additionally, we note that our method achieves the absolute lowest test error across varying values of $C$. We report the average values of our KMM experiments over $30$ runs in Figure~\ref{fig:kmm_uci}.

All methods performed poorly overall for our KLIEP experiments. This result aligns with many past works with KLIEP importance-weighted classification; empirically, KLIEP only outperforms baseline unweighted classifiers on synthetic data, while on more complex datasets (e.g. UCI), KLIEP shows no significant improvements \citep{sugiyama2007direct, tsuboi2008direct, yamada2009direct, loog2012nearest}. In order to confirm the consistency of this behavior, we performed an additional experiment with a slightly different dataset-biasing process in which data points that were further from the mean were selected less often, similarly to \cite{huang2006correcting}; we report more details on the biased subsampling process in Appendix~\ref{appendix:kmm-kliep-datasets}. We used two datasets: 1) the UCI Blood Transfusion dataset and 2) the UCI Wine Quality dataset and found that both reweighted classifiers performed similarly to the baseline. Notably, our \texttt{z-dre} method does not degrade the performance of KLIEP. Table~\ref{table:kliep_exp} shows our results.
\begin{table*}[ht]
\centering
\resizebox{0.87\linewidth}{!}{
\begin{tabular}{l|l|l|l|l}
\toprule
\textbf{Blood Transfusion} & $C=0.1$ & $C=1$ & $C=10$ & $C=100$ \\
\midrule
KLIEP with DRE in z-space (ours) & $0.235 \pm .0274$ & $0.235 \pm .0274$ & $0.234 \pm .0283$ & $0.234 \pm .0282$ \\ 
KLIEP with DRE in x-space & $0.235 \pm .0274$ & $0.235 \pm .0274$ & $0.234 \pm .0282$ & $0.233 \pm .0284$ \\
Unweighted SVM baseline & $0.235 \pm .0274$ & $0.235 \pm .0274$ & $0.234 \pm .0287$ & $0.234 \pm .0285$  \\
\midrule
\textbf{Wine Quality} & $C=0.1$ & $C=1$ & $C=10$ & $C=100$ \\
\midrule
KLIEP with DRE in z-space (ours) & $0.304 \pm .0120$ & $0.260 \pm .00937$ & $0.265 \pm .00817$ & $0.290 \pm .00987$ \\ 
KLIEP with DRE in x-space & $0.304 \pm .0123$ & $0.262 \pm .0105$ & $0.266 \pm .0113$ & $0.290 \pm .0103$  \\
Unweighted SVM baseline & $0.302 \pm .0274$ & $0.257 \pm .0074$ & $0.262 \pm .00863$ & $0.289 \pm .0933$  \\
\bottomrule
\end{tabular}
}
\caption{KLIEP test error of each method on binary classification for the UCI Blood Transfusion and Wine Quality datasets. Results are averaged over 30 runs. KLIEP reweighting in general does not offer significant improvement over the unweighted baseline--in particular, our method (z-space) doesn't degrade performance.}
\label{table:kliep_exp}
\end{table*}
\begin{figure}[!ht]
    \centering
        \includegraphics[width=.4\textwidth]{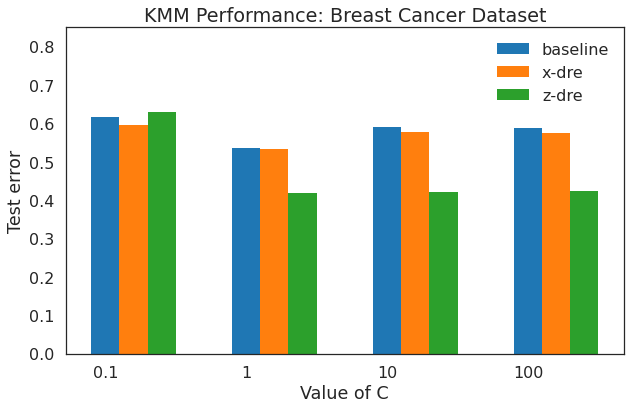}
    \caption{KMM test error on binary classification of the UCI Breast Cancer dataset using a SVM with varying $C$. Lower is better. Results are averaged over 30 runs.}
    \label{fig:kmm_uci}
   \end{figure}



\subsection{Mutual Information Estimation}\label{mi_exp}
Next, we test our approach on a mutual information (MI) estimation task between two correlated 20-dimensional Gaussian random variables, where the ground truth MI is tractable. 
MI estimation between two random variables $X_p$ and $X_q$ is a direct application of density ratio estimation, as the problem can be reduced to estimating average density ratios between their joint density and the product of their marginals. If we let $v$ denote the joint density of $X_p$ and $X_q$, we can see that: $I(X_p;X_q) = \mathbb{E}_{v(\bx^p,\bx^q)}\left[\log \frac{v(\bx^p,\bx^q)}{p(\bx^p)q(\bx^q)}\right]$.
We adapt the experimental setup of \citep{belghazi2018mutual,poole2019variational,song2019understanding} to use a correlation coefficient of $\rho=0.9$. 

We further explore the effect of the various training strategies as outlined in Section~\ref{train_methods}. While we use a MAF as the normalizing flow for all configurations, we evaluate our approach against: (a) the baseline classifier (\texttt{baseline}); (b) the two-stage approach (\texttt{separate}), where the flow is trained first on a mixture of $\mathcal{D}_p$ and $\mathcal{D}_q$ before training the classifier on the encoded data points; (c) jointly training the flow and the classifier (\texttt{joint}); and (d) a purely discriminative approach where the classifier architecture has a flow component (\texttt{disc-only}). For joint training, we sweep over $\alpha=\{0.1, 0.5, 0.9\}$. As shown in Figure~\ref{fig:mi}, the probabilistic classifier trained in feature space (after encoding the data using the normalizing flow) via our method outperforms relevant baselines. Interestingly, we find that for the joint training, higher values of $\alpha$ (which places a greater emphasis on the classification loss $\mathcal{L}_\text{sup}$ rather than $\mathcal{L}_{\text{flow}}$ as in Eq.~\ref{joint_obj}) leads to more accurate MI estimates. For additional details on the data generation process and experimental setup, we refer the reader to Appendix~\ref{appendix_arch}. 

\begin{figure}[ht!]
    \centering
        \includegraphics[width=.47\textwidth]{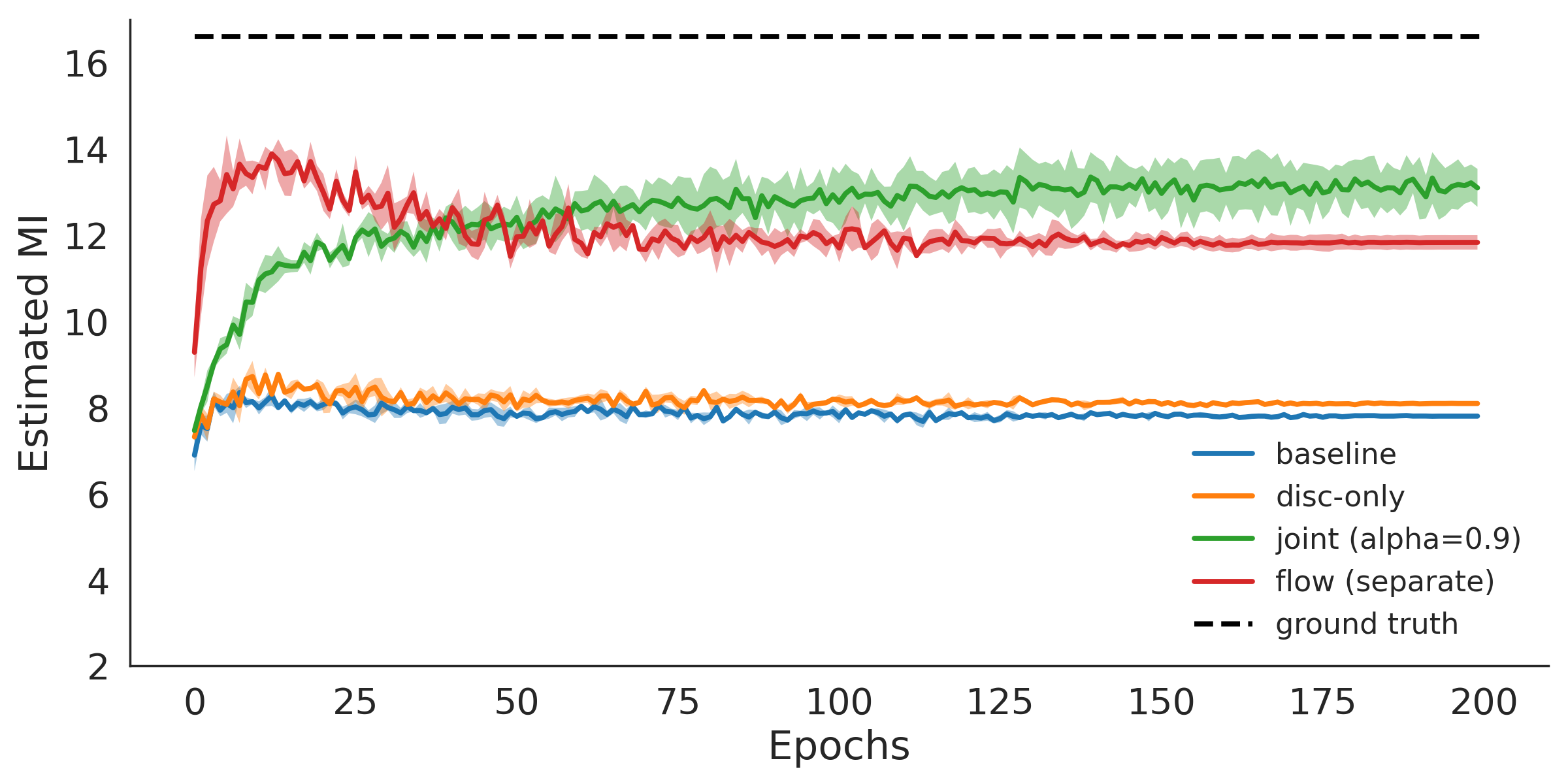}
    \caption{Estimated MI for various training strategies. The true MI for the corresponding value of $\rho=0.9$ is $16.67$. While separate training outperforms all baselines, joint training achieves competitive performance with larger $\alpha$.}
    \label{fig:mi}
   \end{figure}

\subsection{Targeted Generation with MNIST} \label{mnist_exp}
For this experiment, we evaluate the effectiveness of our learned density ratio estimates on a targeted generation task using the MNIST dataset. Our goal is to generate samples according to a target distribution $q(\bx)$ in a data-efficient manner, given samples from both $p(\bx)$ and $q(\bx)$. We test two scenarios: (a) \texttt{diff-digits}: a subset of MNIST in which $p(\bx)$ is comprised of the digits labeled \{1,2\}, and $q(\bx)$ which is comprised of the digits labeled \{0,7\}; (b) \texttt{diff-background}: a setting in which $p(\bx)$ contains the original MNIST digits (black background, white digits); and $q(\bx)$ contains the same examples but with flipped colors (white background, black digits). The second scenario is trickier than the first, since there exists an obvious gap between the two distributions. We also explore the effect of the target dataset size $q(\bx)$ in learning accurate density ratios. Following the setup of \citep{choi2020fair}, we sweep over various sizes of $q(\bx)$ relative to $p(\bx)$, which we call \texttt{perc}=\{0.1, 0.25, 0.5, 1.0\} (where $\texttt{perc}=0.5$ indicates that $\mathcal{D}_q$ is 50\% the size of $\mathcal{D}_p$). After training a MAF on both $\mathcal{D}_p$ and $\mathcal{D}_q$ and obtaining density ratio estimates (importance weights), we sample from the trained MAF via sampling-importance-resampling (SIR) \citep{liu1998sequential,doucet2000sequential} at generation time.

As shown in Table~\ref{table:mnist_exp}, we achieve greater success in the targeted generation task when performing SIR with importance weights learned in feature space. Averaged across \texttt{perc}=\{0.1, 0.25, 0.5, 1.0\} with $1000$ generated samples each, our method generates 19.1\%  more samples from $q(\bx)$ relative to the pretrained flow and 6.7\% more samples than the baseline with importance weights learned in input space on the \texttt{diff-digits} task. Similarly for the \texttt{diff-background} task, our framework generates 18.8\% more samples from $q(\bx)$ relative to the pretrained flow and 16.4\% more samples than the baseline.
For additional experimental details, as well as the generated samples, 
we refer the reader to Appendix~\ref{exp_details} and \ref{addtl-results}.
\begin{table*}[ht]
\centering
\begin{tabular}{l|l|l|l|l}
\toprule
\textbf{Different Digits} & \texttt{perc}=0.1 & \texttt{perc}=0.25 & \texttt{perc}=0.5 & \texttt{perc}=1.0 \\
\midrule
SIR with IW(z) (ours) & \textbf{0.447} $\pm$ \textbf{0.020} & 0.518 $\pm$ 0.008& \textbf{0.777} $\pm$ \textbf{0.018} & \textbf{0.860} $\pm$ \textbf{0.004} \\ 
SIR with IW(x) & 0.441 $\pm$ 0.002 & \textbf{0.528} $\pm$ \textbf{0.004} & 0.639 $\pm$ 0.007 & 0.754 $\pm$ 0.007  \\
Regular sampling & 0.406 $\pm$ 0.055 & 0.457 $\pm$ 0.07 & 0.596 $\pm$ 0.052 & 0.720 $\pm$ 0.035  \\
\midrule
\textbf{Different Backgrounds} & \texttt{perc}=0.1 & \texttt{perc}=0.25 & \texttt{perc}=0.5 & \texttt{perc}=1.0\\
\midrule
SIR with IW(z) (ours) & \textbf{0.186} $\pm$ \textbf{0.005} & \textbf{0.377} $\pm$ \textbf{0.001} & \textbf{0.580} $\pm$ \textbf{0.005} & \textbf{0.732} $\pm$ \textbf{0.008} \\ 
SIR with IW(x) & 0.085 $\pm$ 0.003 & 0.202 $\pm$ 0.003 & 0.345 $\pm$ 0.013 & 0.528 $\pm$ 0.022  \\
Regular sampling & 0.084 $\pm$ 0.003 & 0.196 $\pm$ 0.003 & 0.304 $\pm$ 0.003 & 0.493 $\pm$ 0.016  \\
\bottomrule
\end{tabular}
\caption{MNIST targeted generation results averaged over 3 runs. Columns show the fraction of generated samples with the target attribute (higher is better) across varying sizes of the target dataset. $1000$ samples were generated for each setup.}
\label{table:mnist_exp}
\end{table*}

\subsection{Classification with Data Augmentation on Omniglot}
Finally, we follow the experimental setup of \citep{grover2019bias} by utilizing Data Augmentation Generative Adversarial Networks (DAGAN) \citep{antoniou2017data} as a generative model for importance-weighted data augmentation on the Omniglot dataset \citep{lake2015human}. Since Omniglot is comprised of 1600+ classes with only 20 examples per class, the goal of this experiment is improve the performance of a downstream multi-class classifier by effectively leveraging additional samples generated by the DAGAN. To do so, we train a separate probabilistic classifier to distinguish between the true and the generated examples, yielding importance weights for each synthetic example that can be used for training the downstream classifier of interest.

We first train a MAF on a mixture of the training examples and generated samples, encode all the data using the flow, and obtain importance weights via the encodings. The importance weights are obtained by training a binary classifier on the featurized inputs. We experiment with different baselines: (a) training the classifier without any data augmentation (\texttt{Data-only}); (b) training the classifier on purely synthetic samples (\texttt{Synthetic-only}); (c) training the classifier with data-augmentation without any importance weighting (\texttt{Mixture-only}); (d) the data-augmented classifier with importance weights obtained from input space (\texttt{Mixture + IW(x)}); and (e) the data-augmented classifier with importance weights obtained from feature space (\texttt{Mixture + IW(z)}). As shown in Table~\ref{table:omniglot_exp}, the importance weights learned in the feature space show a significant boost in overall downstream classification accuracy as compared to relevant baselines: our method improves 3.7\% over the \texttt{Data-only} baseline, and 2.2\% over the highest performing baseline. We refer the reader to Appendix~\ref{addtl-results} for additional experimental details and results.
\begin{table*}[ht]
\centering
\begin{tabular}{l|l|l|l|l|l}
\toprule
\textbf{Dataset} & Data-only & Synthetic-only & Mixture-only & Mixture + IW(x) & Mixture + IW(z)\\
\midrule
\textbf{Accuracy} & $0.756 \pm 0.001$ & $0.557 \pm 0.003$ & $0.767 \pm 0.003$ & $0.765 \pm 0.005$ & $\mathbf{0.784 \pm 0.007}$ \\ 
\bottomrule
\end{tabular}
\caption{Downstream predictive accuracy on the Omniglot dataset. Standard errors are computed over 3 runs.}
\label{table:omniglot_exp}
\end{table*}


\section{Related Work}
\label{related}
\paragraph{Density Ratio Estimation in Feature Space.} Although density ratio estimation in machine learning has an extremely rich history \citep{friedman2001elements,huang2006correcting,nguyen2007estimating,gutmann2010noise,sugiyama2012density}, there is considerably less work exploring the method's counterpart in feature space. 
\citep{rhodes2020telescoping}, while tackling the same problem of density ratio estimation between two different data distributions, adopts a different approach than our framework. In particular, they propose a divide-and-conquer solution by constructing intermediate distributions between the two densities $p(\bx)$ and $q(\bx)$, and requires the training of a multi-task logistic regression model rather than a single binary classifier. Their interpolation technique, which is also conducted in the latent space of a normalizing flow in one of their experiments, is complementary to our framework -- investigating the combination of these two approaches would be interesting future work.
Additionally, density ratio estimation (in the form of learning importance weights) has been popular in a variety of domain adaptation approaches such as \citep{bickel2007discriminative,long2015learning,you2019towards} which leverage a feature extractor to project the inputs into a lower-dimensional manifold prior to estimation. Although our approach shares a similar idea, the invertibility of our feature map guarantees that the density ratios between input space and feature space are equivalent -- this is not necessarily true if the inputs are lossily compressed. 

\paragraph{Neural Hybrid Models.} Combining both generative and discriminative training approaches in neural networks has previously been explored in the literature \citep{maaloe2016auxiliary,gordon2017bayesian,kuleshov2017deep}. Our work bears most similarity to \citep{nalisnick2019hybrid}, as we also require learning an invertible generative model and a discriminator. However, our method does not require that the normalizing flow be trained together with the probabilistic classifier, and can be used for more downstream applications beyond out-of-distribution detection and semi-supervised learning, as our goal is to accurately estimate density ratios. Additionally, our approach is related to conditional normalizing flows such as \citep{dinh2019rad} and \citep{winkler2019learning} which explicitly partition the latent space of the flow $p_Z(\bz)$ to map different components of the input into disjoint regions in the prior. Although we empirically verify that this is also the case for our method, it is more general precisely because the best partitioning is \textit{learned} by the model.


\section{Conclusion}
In this paper, we proposed a general-purpose framework for improved density ratio estimation in settings where the two underlying data distributions of interest are sufficiently different. 
The key component of our approach is a normalizing flow that is trained on a mixture of the data sources, which is then used to encode the data into a shared feature space prior to estimating density ratios. 
By leveraging the invertibility of the flow, we showed that the ratios of the densities in feature space are not only identical to those in input space, but are also easier to learn. Additionally, our method is applicable to a suite of existing density ratio estimation techniques. 
Empirically, we demonstrated the utility of our framework on various combinations of density ratio estimation techniques and downstream tasks that rely on accurate density ratios for good performance, such as domain adaptation, mutual information estimation, and targeted generation in deep generative models.
We provide a reference implementation in PyTorch \citep{paszke2017automatic}, and the codebase for this work is open-sourced at \url{https://github.com/ermongroup/f-dre}.

One limitation of our method is the need to train a normalizing flow on a mixture of the two datasets if a pre-trained model is not available; this may be difficult if the generative model must be extremely high-capacity. For future work, it would be interesting to explore whether the necessity of strict invertibility of the flow can be relaxed, and to gain a deeper theoretical understanding of the role of maximum likelihood training in our framework.


\begin{acknowledgements}
We are thankful to Jiaming Song, Daniel Levy, Rui Shu, Ishaan Gulrajani, and Kuno Kim for insightful discussions and feedback.
KC is supported by the NSF GRFP, Stanford Graduate Fellowship, and Two Sigma Diversity PhD Fellowship.
This research was supported by NSF (\#1651565, \#1522054, \#1733686), ONR (N00014-19-1-2145), AFOSR (FA9550-19-1-0024), ARO (W911NF2110125), and Amazon AWS.
\end{acknowledgements}

\bibliography{main}

\raggedbottom
\pagebreak

\onecolumn
\section*{Appendix}
\renewcommand{\thesubsection}{\Alph{subsection}}

\newtheorem*{L1}{Lemma~\ref{lemma:cvf}}
\newtheorem*{C1}{Corollary~\ref{lemma:cvf:unbiased}}
\newtheorem*{C2}{Corollary~\ref{lemma:cvf:consistent}}
\label{appendix}

\subsection{Featurized KMM and KLIEP}
\label{dre_details}


Similar in spirit to the probabilistic classification approach in Section~\ref{existing-dre}, we note that it is quite straightforward to extend this technique to non-parametric density ratio estimation methods. Suppose that $(\hat{r}' \circ f_\theta)$ is obtained from $\hat{r}' = \dre \left(f_\theta(\Dc_p),f_\theta(\Dc_q) \right)$. Then, we find that the solution to KMM is equivalent after we first map the inputs into the feature space via $f_\theta: \mathcal{X} \rightarrow \mathcal{Z}$:
\begin{align*}
    \min_{\hat{r} \in \mathcal{H}}||\mathbb{E}_{q'(f_\theta(\bx))}\left[k(f_\theta(\bx),\cdot)(\hat{r}' \circ f_\theta)(\bx) \right] - \mathbb{E}_{p'(f_\theta(\bx))}\left[k(f_\theta(\bx),\cdot) \right]||^2_{\mathcal{H}}
\end{align*}

For KLIEP, we may also solve for the density ratio estimates in feature space:
\begin{align*}
    &\min_{(\hat{r}' \circ f_\theta)(\bx)} \mathbb{E}_{p'(f_\theta(\bx))} \left[ \log \frac{p'(f_\theta(\bx))}{(\hat{r}' \circ f_\theta)(\bx)q'(f_\theta(\bx))} \right]\\
    &\textrm{s.t. } \int (\hat{r}' \circ f_\theta)(\bx)q'(f_\theta(\bx)) d\bx = 1
\end{align*}
as a straightforward consequence of Lemma~\ref{lemma:cvf}.

\subsection{Derivations for Motivating Example}\label{intuition}
We derive the calculations from the simple example presented in Section~\ref{intuition_ex}, and restate the problem setting here for completeness. Suppose we want to estimate the density ratios between two Gaussians, $p \sim \mathcal{N}(m,1)$ and $q \sim \mathcal{N}(-m,1)$, with a finite number of samples $\Dc_p$ and $\Dc_q$ of size $n$ from each. We denote the random variable with the density $p$ as $X_p$, and the random variable with density $q$ as $X_q$. Our intuition was that as $m$ grows larger, the probability that we would observe all positive samples from $p$ (and analogously all negative samples from $q$) would be extremely high. 

Without loss of generality, we first compute $P(X_p \leq 0)$ by first using the well-known (lower) tail bound for Gaussian random variables:
\begin{align*}
    P(X_p \leq x) &\leq \inf_{\theta \geq 0} \exp(-\theta x)\psi(\theta)\\
    &= \inf_{\theta \geq 0} \exp(-\theta x + \theta m + \theta^2 /2) \\
    &= \exp(-m^2/2) \textrm{ for all } x \leq 0
\end{align*}
since the minimum is achieved at $\theta^*=x-m$, where $\psi(\theta) = \exp(\theta m + \theta^2/2)$ is the moment generating function for $\mathcal{N}(m,1)$. This tells us that the probability of observing a single positive sample from $p$ is $P(X_p > 0) = 1 - P(X_p \leq 0) \geq 1 - \exp(-m^2/2)$, so taking account the fact that we have $n$ i.i.d. samples gives us: $\prod_{i=1}^n P(X_p > 0) \geq (1 - \exp(-m^2/2))^n$.

Next, we compute the probability of seeing a single sample in our training set such that $X_p \leq 0$. Our reasoning was that such observed examples would help mitigate the pathological behavior of our learning algorithm driving up the magnitude of the logistic regression parameters to infinity. We find that:
\begin{align*}
    P(\textrm{at least one } X_p \leq 0) &= 1 - \prod_{i=1}^n P(X_p > 0) \\
    &\leq 1 - (1 - \exp(-m^2/2))^n
\end{align*}
which is an extremely low probability. In fact, if we set $1 - (1 - \exp(-m^2/2))^n = \delta$ and solve for $\delta$, we find that:
\begin{align*}
    1 - (1 - \exp(-m^2/2))^n &= \delta \\
    (1 - \exp(-m^2/2))^n &= 1 - \delta \\
    n \log (\exp(-m^2/2)) &= \log (1-\delta)\\
    n &= \frac{\log(1-\delta)}{\log(1-\exp(-m^2/2))}
\end{align*}
Therefore, we observe a non-positive sample from $p$ with probability at most $\delta$ for $n < \frac{\log(1-\delta)}{\log(1-\exp(-m^2/2))}$.

For a perhaps more intuitive bound, we can use Bernoulli's inequality, which states that $(1+x)^r \geq 1 + r \cdot x$ for $x \geq -1, r \in \mathbb{R}/\ (0,1)$. Doing so, we see that: 
\begin{align*}
\prod_{i=1}^n P(X_p > 0) &\geq (1 - \exp(-m^2/2))^n \\
&\geq (1 - n \cdot \exp(-m^2/2))
\end{align*}
which indicates that we require a training set size that is exponential in the order of $m^2$ to avoid the pathological scenario described in Section~\ref{intuition_ex}.

\subsection{Proofs for Theoretical Results}
\label{supp_proofs}
\subsubsection{Proof of Lemma 1}\label{lemma-proof}
For completeness, we restate Lemma~\ref{lemma:cvf} prior to providing the proof.
\begin{L1}
Let $X_p \sim p$ be a random variable with density $p$, and $X_q \sim q$ be a random variable with density $q$. Let $f_\theta$ be any invertible mapping. Let $p',q'$ be the densities of $Z_p = f_\theta(X_p)$ and $Z_q = f_\theta(X_q)$ respectively.
Then for any $\bx$:
\[
\frac{p(\bx)}{q(\bx)} =  \frac{p'(f_\theta(\bx))}{q'(f_\theta(\bx))}
\]
\end{L1}
\begin{proof}
By the change of variables formula:
\begin{align*}
\frac{p(\bx)}{q(\bx)} = \frac{p(\bx) \left|\left[\frac{\partial f_\theta^{-1}(\bt)}{\partial \bt} \right]_{t = f_\theta(\bx)}\right|}{q(\bx) \left|\left[\frac{\partial f_\theta^{-1}(\bt)}{\partial \bt} \right]_{t = f_\theta(\bx)}\right|} = \frac{p'(f_\theta(\bx))}{q'(f_\theta(\bx))} 
\end{align*}
\end{proof}

\subsubsection{Proof of Unbiasedness for the Featurized Density Ratio Estimator (Corollary 1)} \label{unbiased-supp}
For completeness, we restate Corollary~\ref{lemma:cvf:unbiased} prior to providing the proof.
\begin{C1}
Let $\Dc_p$ be $n$ i.i.d samples from density $p$, and $\Dc_q$ be $n$ i.i.d samples from density $q$.
Let $\hat{r}(\bx)$ obtained from $\hat{r} = \dre \left(\Dc_p,\Dc_q \right)$
be an unbiased estimator of $r(\bx) = \frac{p(\bx)}{q(\bx)}$ and any $p,q$, and let $f_\theta$ denote any invertible mapping. 
Then,
$(\hat{r}' \circ f_\theta)(\bx)$ obtained from $\hat{r}' = \dre \left(f_\theta(\Dc_p), f_\theta(\Dc_q) \right)$
is also an unbiased estimator of $\frac{p(\bx)}{q(\bx)}$ for any $p,q$.
\end{C1}
\begin{proof}
Using the definition of unbiasedness, we have:
\[
\mathbb{E}_{p(\bx),q(\bx)} \left[ \hat{r}(\bx) \right] = \frac{p(\bx)}{q(\bx)}
\]
Let $p',q'$ be the densities of $f_\theta(X_p)$ and $f_\theta(X_q)$, respectively. Consider the estimator $\hat{r}'= \dre(f_\theta(\mathcal{D}_p),f_\theta(\mathcal{D}_q))$ which is an unbiased estimator of $\frac{p'(f_\theta(\bx))}{q'(f_\theta(\bx))}$ by assumption. Then:
\[
\mathbb{E}_{p'(f_\theta(\bx)),q'(f_\theta(\bx))} \left[ (\hat{r}' \circ f_\theta)(\bx) \right] = \frac{p'(f_\theta(\bx))}{q'(f_\theta(\bx))}
\]
By the definition of $p',q'$, this is equivalent to:
\[
\mathbb{E}_{p(\bx),q(\bx)} \left[ (\hat{r}' \circ f_\theta)(\bx) \right] = 
\frac{p'(f_\theta(\bx))}{q'(f_\theta(\bx))} = \frac{p(\bx)}{q(\bx)}
\]
where the last equality follows from Lemma~\ref{lemma:cvf}. 
\end{proof}

\subsubsection{Proof of Consistency (Corollary 2)} \label{consistency-supp}
For completeness, we restate Corollary~\ref{lemma:cvf:consistent} before the proof statement.
\begin{C2}
Let $\Dc_p$ be $n$ i.i.d samples from density $p$, and $\Dc_q$ be $n$ i.i.d samples from density $q$. Let $\hat{r}(\bx)$ obtained from $\hat{r} = \dre(\Dc_p,\Dc_q)$
be a consistent estimator of $\frac{p(\bx)}{q(\bx)}$ for all $\bx \in \mathcal{X}$ and for any $p,q$. Additionally, let $f_\theta$ be any invertible mapping. Then, $(\hat{r}' \circ f_\theta)(\bx)$ obtained from $\hat{r}' = \dre \left(f_\theta(\Dc_p), f_\theta(\Dc_q)\right)$
is also a consistent estimator of $\frac{p(\bx)}{q(\bx)}$ for any $p,q$.
\end{C2}

\begin{proof}
By the definition of consistency, we have that $\forall \bx \in \mathcal{X}$ and $\forall \epsilon > 0$:
\[
\lim_{n \rightarrow \infty} P_{p,q} \left[
\left|\hat{r}(\bx)-\frac{p(\bx)}{q(\bx) } \right| > \epsilon\right] = 0
\]
Let $p',q'$ be the densities of $f_\theta(X_p)$ and $f_\theta(X_q)$ respectively. Because the estimator is assumed to be consistent for any $p,q$:
\[
\lim_{n \rightarrow \infty} P_{p',q'} \left[
\left|\hat{r}'(\bx)-\frac{p'(\bx)}{q'(\bx) } \right| > \epsilon\right] = 0
\]
and by definition of $p',q'$ this is equivalent to:
\[
\lim_{n \rightarrow \infty} P_{p,q} \left[
\left|\hat{r}'(\bx)-\frac{p'(\bx)}{q'(\bx) } \right| > \epsilon\right] = 0
\]
Because the condition holds $\forall \bx \in \mathcal{X}$, we have:
\[
\lim_{n \rightarrow \infty} P_{p,q} \left[
\left|(\hat{r}' \circ f_\theta)(\bx)-\frac{p'(f_\theta(\bx))}{q'(f_\theta(\bx)) } \right| > \epsilon\right] = 0
\]
\[
\lim_{n \rightarrow \infty} P_{p,q} \left[
\left|(\hat{r}' \circ f_\theta)(\bx)-\frac{p(\bx)}{q(\bx) } \right| > \epsilon\right] = 0
\]
where the last equality is due to Lemma~\ref{lemma:cvf}.
\end{proof}

\subsection{Additional Experimental Details}
\label{exp_details}
\subsubsection{Miscellaneous Background Information} \label{hacks}
\paragraph{Data Preprocessing.}
Prior to training the MAF, we: (a) use uniform dequantization; (b) rescale the pixels to lie within [0,1], and apply the logit transform following \citep{papamakarios2017masked}. For classification, we simply rescale the pixels to lie within [0,1].

\paragraph{Importance Weighting in Practice.}
As noted in \citep{grover2019bias}, we apply two techniques when using the learned density ratio estimates as importance weights in our experiments. 
\begin{enumerate}
    \item Self-normalization: As a way to reduce variance, we ensure that the importance weights in a batch of $n$ examples sum to 1, as in the expression below:
\[
\tilde{r}(\bx_i) = \frac{\hat r(\bx_i)}{\sum_{j=1}^n \hat r(\bx_j)}
\]
We find that this technique works quite well when estimating density ratios in input space.
    \item Flattening: we raise our obtained density ratio estimates to the power of a scaling parameter $\gamma \geq 0$:
    $$\tilde{r}(\bx_i) = \hat r(\bx_i)^\gamma$$
    Empirically, we observe that this approach works best on the ratios obtained in feature space.
\end{enumerate}

\subsubsection{KMM and KLIEP}\label{appendix:kmm-kliep-datasets}
\paragraph{Code.} For our experiments using KMM and KLIEP, we based our code on the following implementations:
\begin{itemize}
    \item \url{https://github.com/sksom/Classification-using-KMM-Kernel-Mean-Matching-}
    \item \url{https://github.com/srome/pykliep}
\end{itemize}
\paragraph{Datasets.} We used two datasets for both the KMM and KLIEP experiments: a generated 2D mixture of Gaussians dataset, and the Breast Cancer Wisconsin Data Set from the UCI Archive [\cite{Dua:2019}]. For each dataset, we construct our source and target splits as follows:
\begin{itemize}
    \item 2D Mixture of Gaussians: For our source dataset, we sampled $10$ points from $\mathcal{N}(0, I)$ and $990$ points from $\mathcal{N}([3,3]^T, I)$, and for our target dataset, we sampled 990 points from $\mathcal{N}(0, I)$ and 10 points from $\mathcal{N}([3,3]^T, I)$.
    \item Breast Cancer: Each sample consists of $9$ input features (each of which with values ranging from $0-9$) and one binary label. For each of $n=30$ trials, we first set aside $3/4$ of the dataset for our target dataset and then, with the remaining $1/4$ of the data, constructed a biased source dataset by subsampling the training data according to $P(s = 1|y = 1) = 0.1$ and $P(s = 1\mid y = -1) = 0.9$, where $s$ indicates whether or not we include the sample. After subsampling, we normalized each feature value to be mean $0$ and variance $1$ (the same as in \citep{huang2006correcting}).
    \item Blood Transfusion: This dataset consists of 748 samples (each corresponding to one person) with $5$ input features and one binary label that represents whether or not the person is a blood donor. For each of $n = 30$ trials, as with the Breast Cancer dataset, we set aside $3/4$ of the dataset for the target dataset and used the remaining $1/4$ of the data to construct a biased source dataset by subsampling $x_i$ according to $P(s_i \mid x_i) \propto \exp(-\sigma\lVert x_i - \bar{x}\rVert^2)$ where $\sigma = 1/20$ (following \citep{huang2006correcting}).
    \item Wine Quality: This dataset consists of 4898 samples with $12$ input features and a label between $0--10$ representing the wine quality. The binary classification task was the prediction of whether or not the wine quality was $\geq 5$. We followed the same subsampling setup as for the Blood Transfusion dataset.
\end{itemize} 
\paragraph{Models.}  For our KMM experiments on both the 2D Mixture of Gaussians and the Breast Cancer datasets, we did a grid search over two parameters: $\gamma$, the kernel width, and $B$, the upper bound on the density ratio estimates. We searched over the values $\gamma = \{0.01, 0.1, 0.5, 1.0\}$ and $B = \{1, 10, 100, 1000\}$.\\
For classification of the mixture of Gaussians, we used scikit-learn's \texttt{LogisticRegression} class. For the support vector classifier for the Breast Cancer dataset, we used scikit-learn's \texttt{SVC} class with a Gaussian kernel parameterized by $\gamma = 0.1$ penalty parameter $C = \{0.1, 1, 10, 100\}$ (the same setup as \citep{huang2006correcting}).

\subsubsection{Mutual Information Estimation}
For estimating MI, we follow the setup of \citep{belghazi2018mutual,poole2019variational,song2019understanding} but fix $\rho=0.9$. We generate a dataset of 100K examples, using a train/val/test split of 80K/10K/10K. 

\subsubsection{Targeted Generation with MNIST}
We note that a normalizing flow model that has been trained on \textit{any} mixture of $\mathcal{D}_p$ and $\mathcal{D}_q$ can be adapted for downstream applications of density ratio estimation. Concretely, we consider importance sampling, where we are interested in computing a statistic of the data $g(\cdot)$ with respect to a target distribution $p(\bx)$:
\begin{align*}
    \mathbb{E}_{p(\bx)}\left[ g(\bx) \right] &= \mathbb{E}_{ h(\bx)}\left[\frac{p(\bx)}{h(\bx)} g(\bx) \right]\\
    &= \mathbb{E}_{h(\bx)}\left[\frac{p(\bx)}{\frac{1}{2} (p(\bx) + q(\bx))} g(\bx) \right] \\
    &= \mathbb{E}_{h(\bx)}\left[\frac{r(\bx)}{\frac{1}{2} (r(\bx) + 1)} g(\bx) \right] \\
    &= \mathbb{E}_{h(\bx)}\left[r'(\bx) g(\bx) \right]
\end{align*}
where the flow has been trained on an equal-sized mixture of $\mathcal{D}_p$ and $\mathcal{D}_q$, the distribution learned by the flow is denoted as $h(\bx) = \frac{1}{2}p(\bx) + \frac{1}{2}q(\bx)$, and the importance weight (learned density ratio estimate) $r(\bx)$ has been re-balanced to account for the mixing proportions of $p(\bx)$ and $q(\bx)$ in the trained flow: $r'(\bx) = \frac{r(\bx)}{\frac{1}{2}(r(\bx)+1)}$. In the case that the mixing proportions are different (e.g. $\Dc_p$ and $\Dc_q$ are of different sizes), the re-balanced importance weight $r'(\bx)$ can be adjusted accordingly. We use this reweighting procedure in the MNIST targeted sampling experiments in Section~\ref{mnist_exp}.

After training our MAF model $f_\theta$ on the mixture of datasets $\mathcal{D}=\{\Dc_p, \Dc_q\}$, we use sampling-importance-resampling (SIR) \citep{liu1998sequential,doucet2000sequential,grover2019bias} to generate targeted samples from $q(\bx)$. Concretely, we sample $\mathbf{z}_1, ..., \mathbf{z}_n \sim t$ and compute  density ratio estimates $\hat r(\mathbf{z}_1), ..., \hat r(\mathbf{z}_n)$ with our trained probabilistic classifier $c_\phi$.
We then apply self-normalization as described in Appendix~\ref{hacks} to compute normalized importance weights $\tilde{r}(\mathbf{z}_1), ..., \tilde{r}(\mathbf{z}_n)$. Finally, we sample $j \sim$ Categorical$(\tilde{r}(\mathbf{z}_1), ..., \tilde{r}(\mathbf{z}_n))$ and generate our final sample $\mathbf{\hat x} = f_\theta^{-1}(\mathbf{ z}_j)$.

\subsubsection{Multi-class Classification with Omniglot}
For training the DAGAN, we followed \citep{antoniou2017data} and directly used the open-source implementation with default training parameters: batch size = 100, $\text{z\_dim}=100$, epochs = 200, 3 generator inner layers, 5 discriminator inner layers, a dropout rate value of 0.5, and the Adam optimizer with learning rate = $1e^{-4}$, $\beta_1=0$, and $\beta_2=0.9$. The repository can be found here: \url{https://github.com/AntreasAntoniou/DAGAN}. Following \citep{grover2019bias} and correspondence from the authors, we trained the DAGAN on the first 1200 character classes of Omniglot, which is typically used as the training split. Thus for both training the DAGAN and for the downstream classifier, we used the first 10 examples from the 1200 classes as the training set, the next 5 examples as the validation set, and the final 5 examples as the test set. All reported numbers in Table~\ref{table:omniglot_exp} are obtained on the final test set.

For the multi-class classification, we used the CNN-based architecture in \citep{vinyals2016matching} as shown in Table~\ref{table:fewshot_arch}. For data augmentation, we randomly sampled 50 examples for each of the 1200 classes from the trained DAGAN -- thus for all other models aside from the \texttt{Data-only} baseline, the training set size increased from (1200*10) to (1200*60).

For importance weighting, we trained both binary classifiers and input-space and feature-space to distinguish between the real and synthetic examples. We applied early stopping to the density ratio classifiers based on the validation set, which was comprised of 5 real examples and 5 synthetic examples. For the input-space density ratio estimation classifier, we found that the self-normalization technique worked best. For the feature-space density ratio estimation classifier, however, we found that flattening with $\gamma=0.2$ worked well, and used this configuration. Additional details on self-normalization and flattening can be found in Appendix~\ref{hacks}.

The importance weighting procedure when training the \textit{downstream classifier} was \textit{only} applied to the synthetic examples -- no additional reweighting was applied to the real examples. Additional details on hyperparameter configurations for both classifiers can be found in Appendix~\ref{dre_clf} and \ref{omniglot_clf}.

\subsection{Architecture and Hyperparameter Configurations}
\label{appendix_arch}

\subsubsection{Masked Autoregressive Flow (MAF)}
For the: (1) synthetic experiments with KMM/KLIEP; (2) toy 2-D Gaussian experiments; (3) mutual information estimation experiments; and (4) few-shot classification experiments with Omniglot, we leverage a Masked Autoregressive Flow (MAF) as our invertible generative model \citep{papamakarios2017masked}. The MAF is comprised of a set of MADE blocks \citep{germain2015made}, each with varying numbers of hidden layers and hidden units depending on the complexity of the dataset as shown in Table~\ref{table:maf_hyps}. We use the sequential input ordering with ReLU activations and batch normalization between the blocks. We build on top of a pre-existing PyTorch implementation (\texttt{https://github.com/kamenbliznashki/normalizing\_flows}).

\begin{table}[H]
\centering
\begin{tabular}{c|c|c|c|c}
\hline
\textbf{Dataset}& \textbf{n\_blocks} & \textbf{n\_hidden} & \textbf{hidden\_size} & \textbf{n\_epochs}\\
\hline
UCI + Synthetic & 5 & 1 & 100 & 100\\
\hline
Toy 2-D Gaussians & 5 & 1 & 100 & 100 \\
\hline
MI Gaussians & 5 & 1 & 100 & 200 \\
\hline
MNIST & 5 & 1 & 1024 & 200 \\
\hline
Omniglot & 5 & 2 & 1024 & 200\\
\hline
\end{tabular}
\caption{Configuration of the number of MADE blocks, number of hidden layers in each MADE block, the number of hidden units, and the total number of training epochs for each dataset.}
\label{table:maf_hyps}
\end{table}

\paragraph{Hyperparameters.} During training, we use a batch size of 100 and the PyTorch default values of the Adam optimizer with learning rate = 0.0001 and weight decay of 1e-6 for all datasets. We use early stopping on the best log-likelihood on a held-out validation set.


\subsubsection{MLP Classifier}
We utilize the following MLP classifier architecture as shown in Table~\ref{table:mlp_arch} for several of our experiments: (a) the synthetic 2-D Gaussians setup in Section~\ref{method}; (b) the mutual information estimation experiment; and (c) the attribute classifier for the targeted MNIST generation task.
\begin{table}[H]
\centering
\begin{tabular}{c|c}
\hline
\textbf{Name}& \textbf{Component}\\
\hline
Input Layer & Linear $\texttt{in\_dim} \rightarrow \texttt{h\_dim}$, ReLU \\
\hline
Hidden Layer \#1 & Linear $\texttt{h\_dim} \rightarrow \texttt{h\_dim}$, ReLU \\
\hline
Hidden Layer \#2 & Linear $\texttt{h\_dim} \rightarrow \texttt{h\_dim}$, ReLU \\
\hline
Output Layer & Linear $\texttt{h\_dim} \rightarrow \texttt{out\_dim}$ \\
\hline
\end{tabular}
\caption{MLP classifier architecture.}
\label{table:mlp_arch}
\end{table}

\paragraph{Hyperparameters.} The relevant hyperparameters for the three previously mentioned experiments are shown in Table~\ref{table:mlp_hyps}. All experiments used the default values of the Adam optimizer unless otherwise specified, and employed early stopping on the best loss on a held-out validation set.
\begin{table}[H]
\centering
\begin{tabular}{c|c|c|c|c|c|c|c|c}
\hline
\textbf{Dataset} & \textbf{in\_dim} & \textbf{h\_dim} & \textbf{out\_dim} & \textbf{n\_epochs} & \textbf{batch\_size} & \textbf{learning\_rate} & \textbf{weight\_decay}\\
\hline
Toy 2-D Gaussians & 2 & 100 & 1 & 100 & 128 & 0.0002 & 0.0005 \\
\hline
MI Gaussians & 40 & 200 & 1 & 200 & 128 & 0.0002 & 0.0005  \\
\hline
MNIST & 784 & 100 & 1 & 10 & 128 & 0.0002 & 0.000  \\
\hline
\end{tabular}
\caption{Configuration of the MLP dimensions for each of the synthetic 2-D Gaussian, mutual information estimation, and MNIST attribute classification experiments, as well as several additional hyperparameters for training.}
\label{table:mlp_hyps}
\end{table}
\pagebreak
We note that for the attribute classifier for MNIST, we explored two scenarios: 
\begin{itemize}
    \item \texttt{diff-digits}, where all digits of classes \{1,2\} were given the label $y=0$, and digits of classes \{0,7\} were labeled as $y=1$
    \item \texttt{diff-background}, where all digits from the original dataset were labeled as $y=0$ and those with flipped colors (white background, black digits) were labeled as $y=1$.
\end{itemize}
In order to distinguish the separate classes for targeted generation, an MLP-based classifier was trained for each of the \texttt{diff-digits} and \texttt{diff-background} tasks as outlined in Tables~\ref{table:mlp_arch} and ~\ref{table:mlp_hyps}.

\subsubsection{Density Ratio Classifier} \label{dre_clf}
Depending on the complexity of the dataset, we used either an MLP classifier (Table~\ref{table:mlp_arch}) or CNN-based classifier (Table~\ref{table:fewshot_arch}) for the density ratio estimator. For all synthetic experiments including those conducted on the MNIST dataset, we used an MLP for both input-space and feature-space density ratio estimation. For the Omniglot experiments, we used a slightly modified version of the CNN-based classifier where we swap the final output layer to be a Linear layer of dimension $64 \rightarrow 1$.

\paragraph{Hyperparameters.} During training, we use a batch size of 64 and the Adam optimizer with learning rate = 0.001. The classifiers learn relatively quickly for both scenarios and we only needed to train for 10 epochs. 

\subsubsection{Downstream Classifier for Omniglot} \label{omniglot_clf}
For the multi-class classification task with Omniglot, we leveraged a commonly-used CNN architecture following \citep{vinyals2016matching}, as shown in the following table:

\begin{table}[h!]
\centering
\begin{tabular}{c|c}
\hline
\textbf{Name}& \textbf{Component}\\
\hline
conv1 & $3\times3$ conv, 64 filters, stride 1, BatchNorm2d, ReLU, $2\times2$ MaxPool \\
\hline
conv2 & $3\times3$ conv, 64 filters, stride 1, BatchNorm2d, ReLU, $2\times2$ MaxPool \\
\hline
conv3 & $3\times3$ conv, 64 filters, stride 1, BatchNorm2d, ReLU, $2\times2$ MaxPool \\
\hline
conv4 & $3\times3$ conv, 64 filters, stride 1, BatchNorm2d, ReLU, $2\times2$ MaxPool \\
\hline
Output Layer & Linear $64 \rightarrow 1200$, Softmax \\
\hline
\end{tabular}
\caption{CNN architecture for Omniglot experiments.}
\label{table:fewshot_arch}
\end{table}


\paragraph{Hyperparameters.} During training, we sweep over batch sizes of \{32,64,128\} and the Adam optimizer with learning rate = 0.001. We also swept over the flattening coefficient for density ratio estimation and found that $\gamma=0.2$ worked best. We trained the classifier for 100 epochs, and used early stopping on the validation set of Omniglot to determine the best model for downstream evaluation.

\begin{figure}
    \centering
        \subfigure[Scatter plot of $p(\bx)$ and $q(\bx)$]{\includegraphics[width=.33\textwidth]{figures/toy_gmm/toy_gmm_data.png}}
        \subfigure[Scatter plot colored by $\log r(\bx)$]{\includegraphics[width=.33\textwidth]{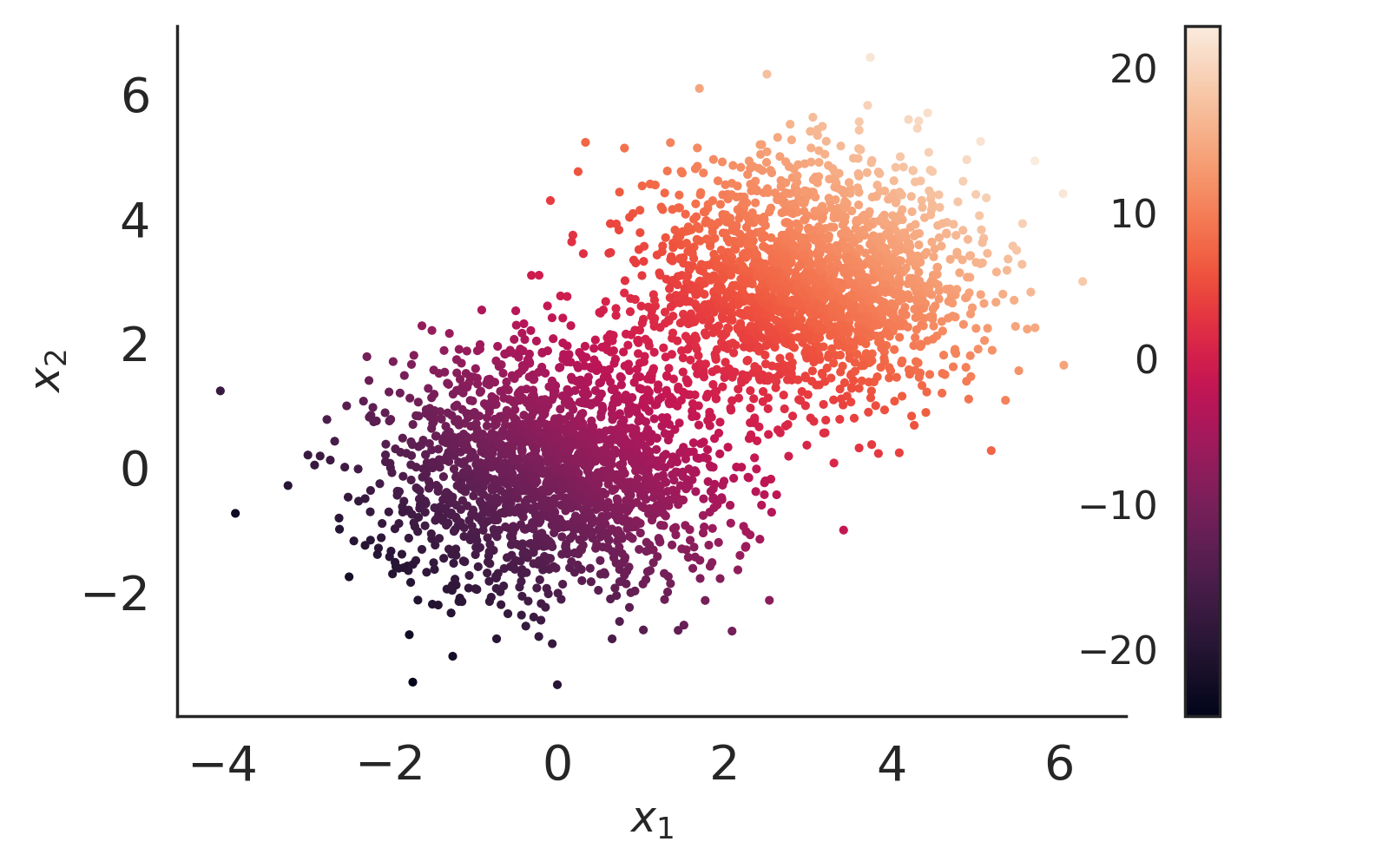}}
        \subfigure[Histogram of $\log r(\bx)$]{\includegraphics[width=.33\textwidth]{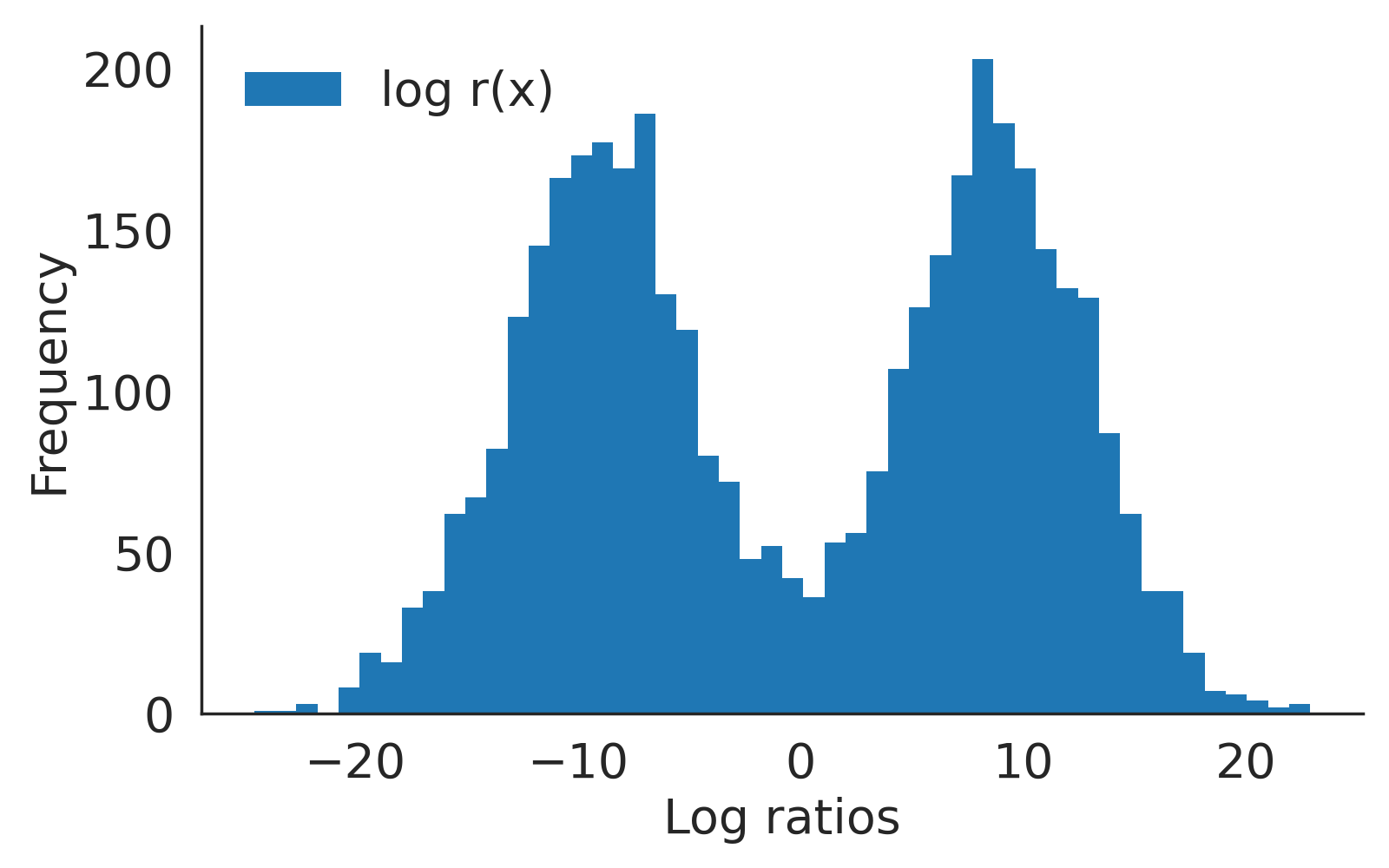}}
    \caption{(a) Data sampled from $p(\bx) \sim \mathcal{N}([0,0]^T, I)$ and $q(\bx) \sim \mathcal{N}([3,3]^T, I)$. (b) The same scatter plot as (a), but colored by the magnitude of the log density ratios. (c) Histogram of the log density ratios for each point in the dataset.}
    \label{fig:toy_gmm_data_supp}
\end{figure}

\begin{figure*}[!ht]
    \centering
        \subfigure[Joint training ($\alpha=0.01$)]{\includegraphics[width=.24\textwidth]{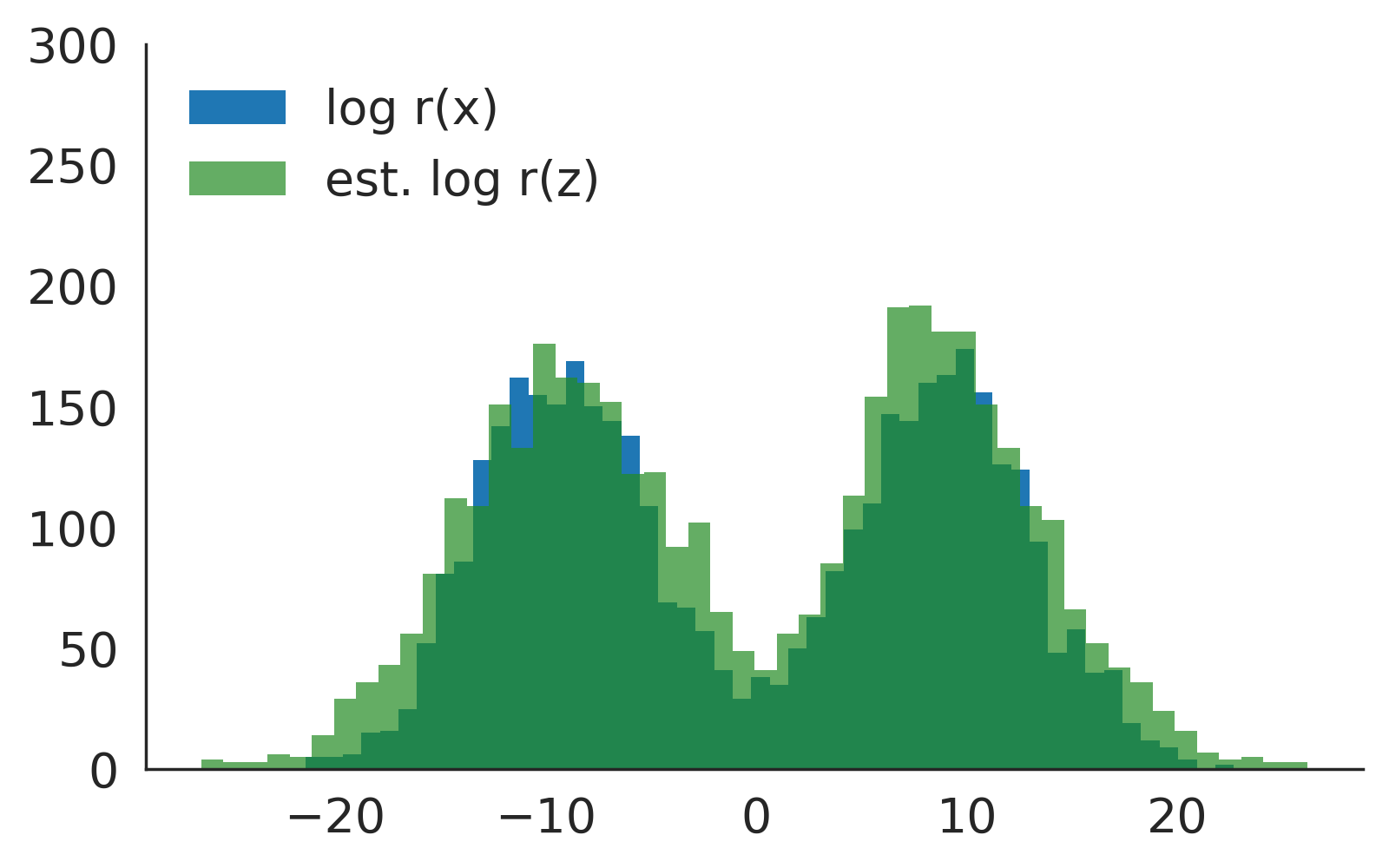}}
        \subfigure[Joint training ($\alpha=0.1$)]{\includegraphics[width=.24\textwidth]{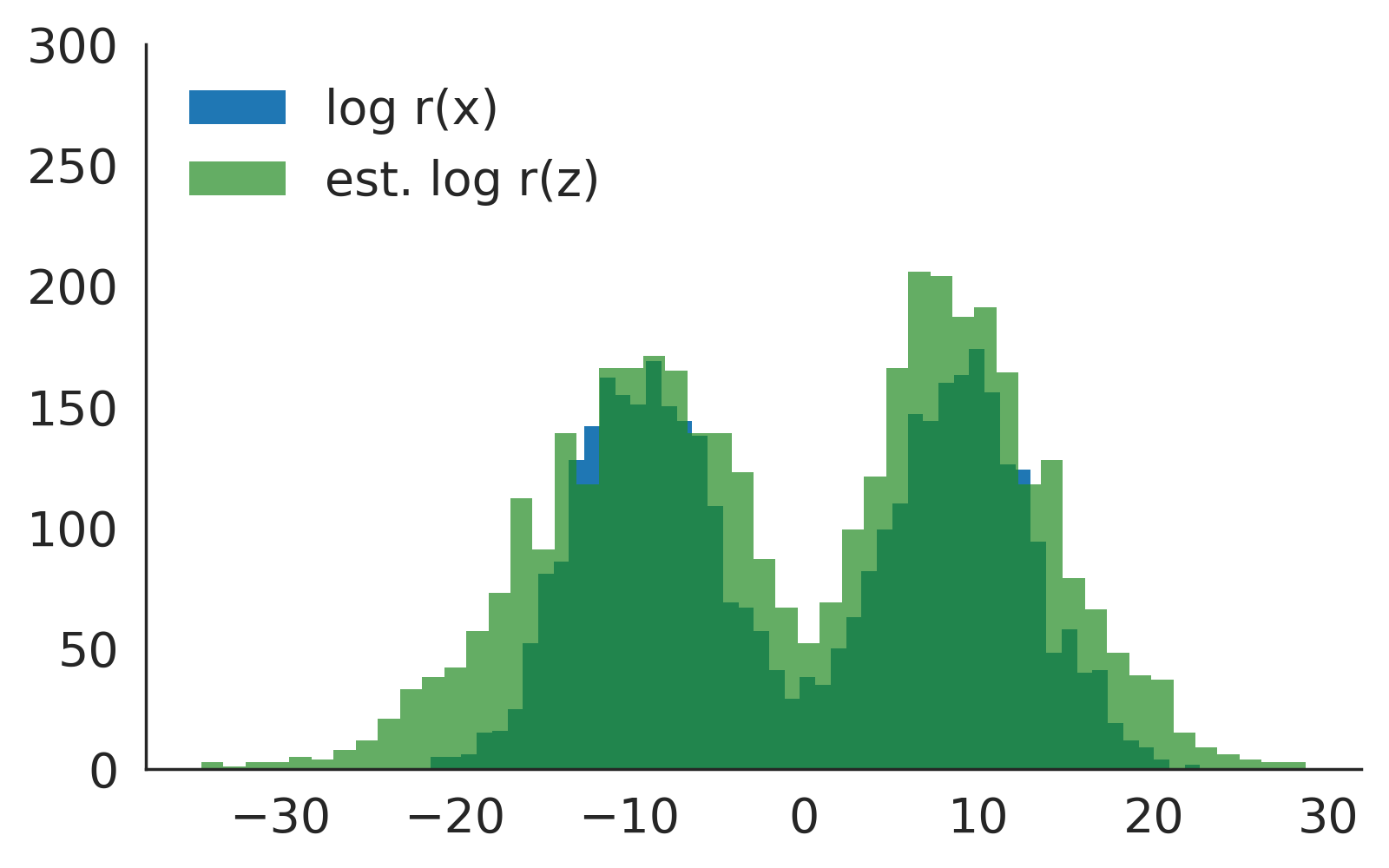}}
        \subfigure[Joint training ($\alpha=0.5$)]{\includegraphics[width=.24\textwidth]{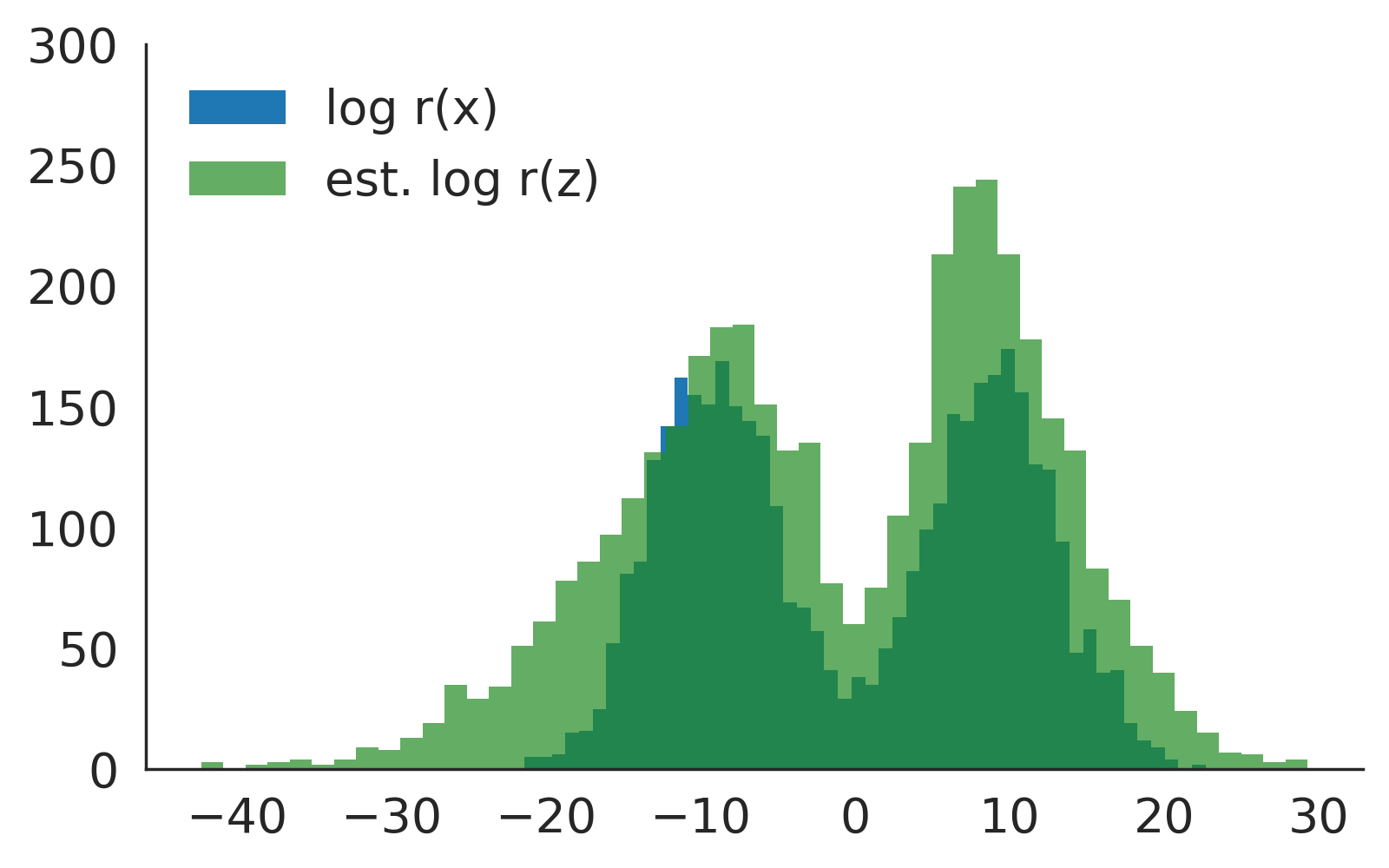}}
        \subfigure[Joint training ($\alpha=0.7$)]{\includegraphics[width=.24\textwidth]{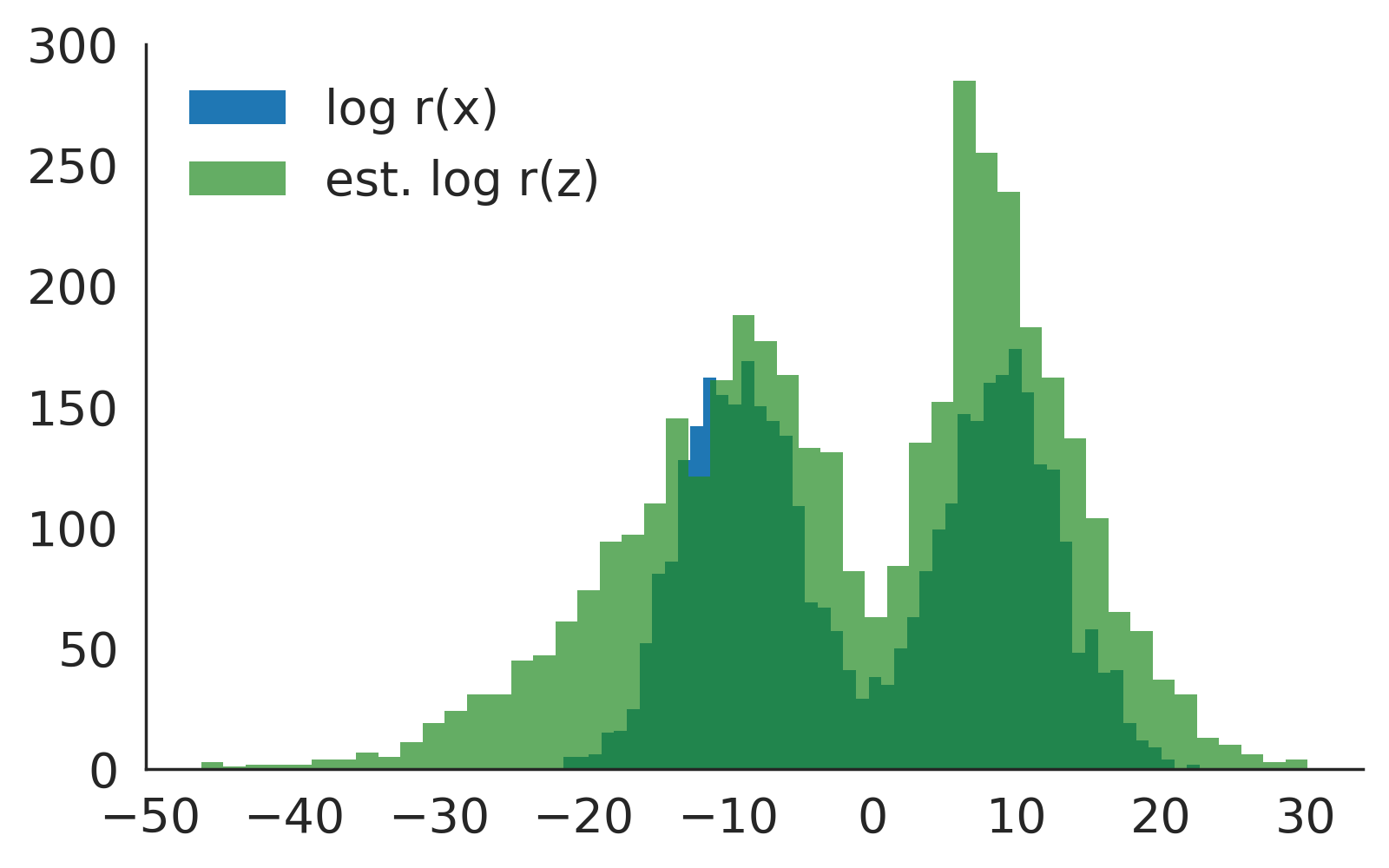}}
        \subfigure[Joint training ($\alpha=0.01$)]{\includegraphics[width=.24\textwidth]{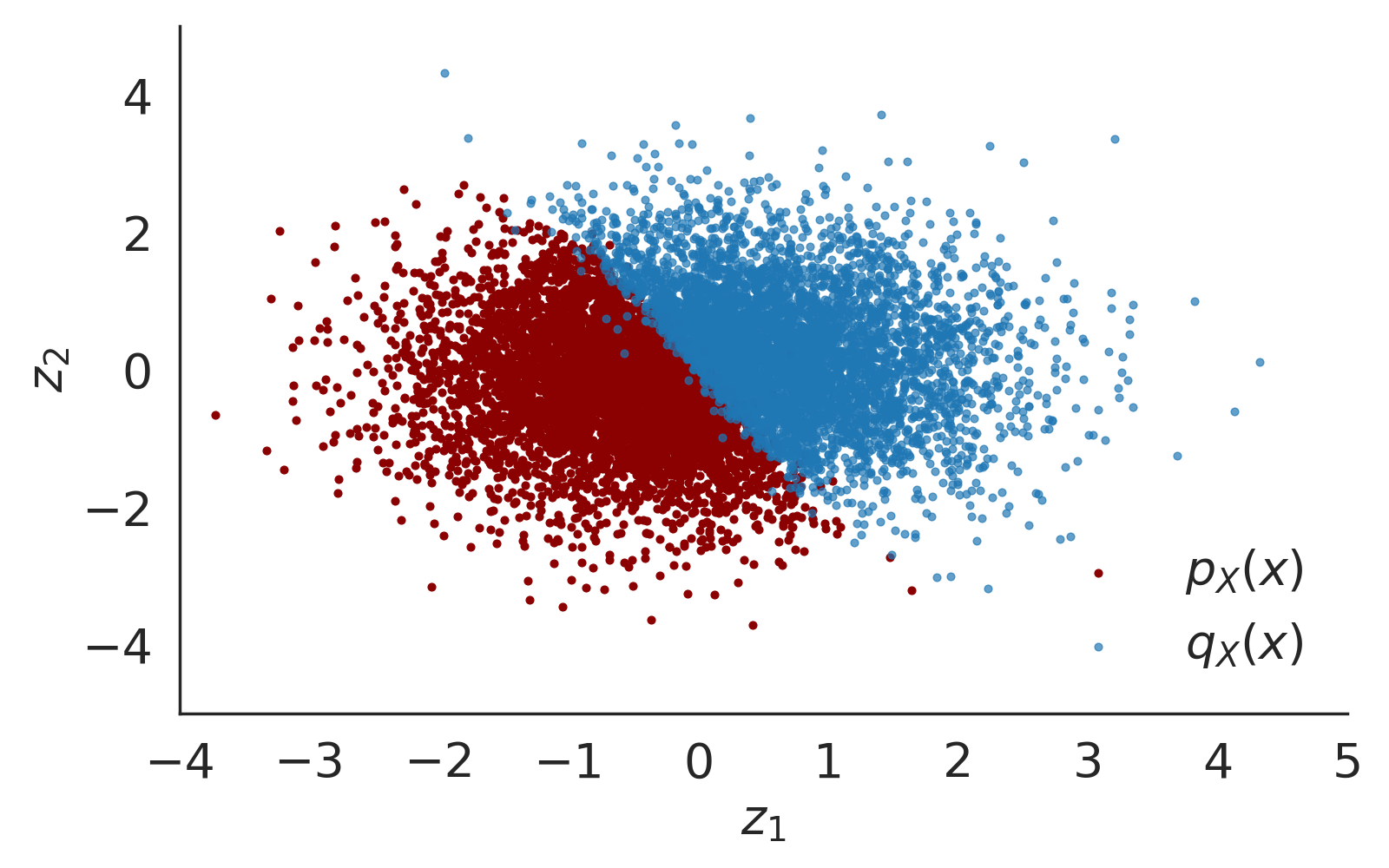}}
        \subfigure[Joint training ($\alpha=0.1$)]{\includegraphics[width=.24\textwidth]{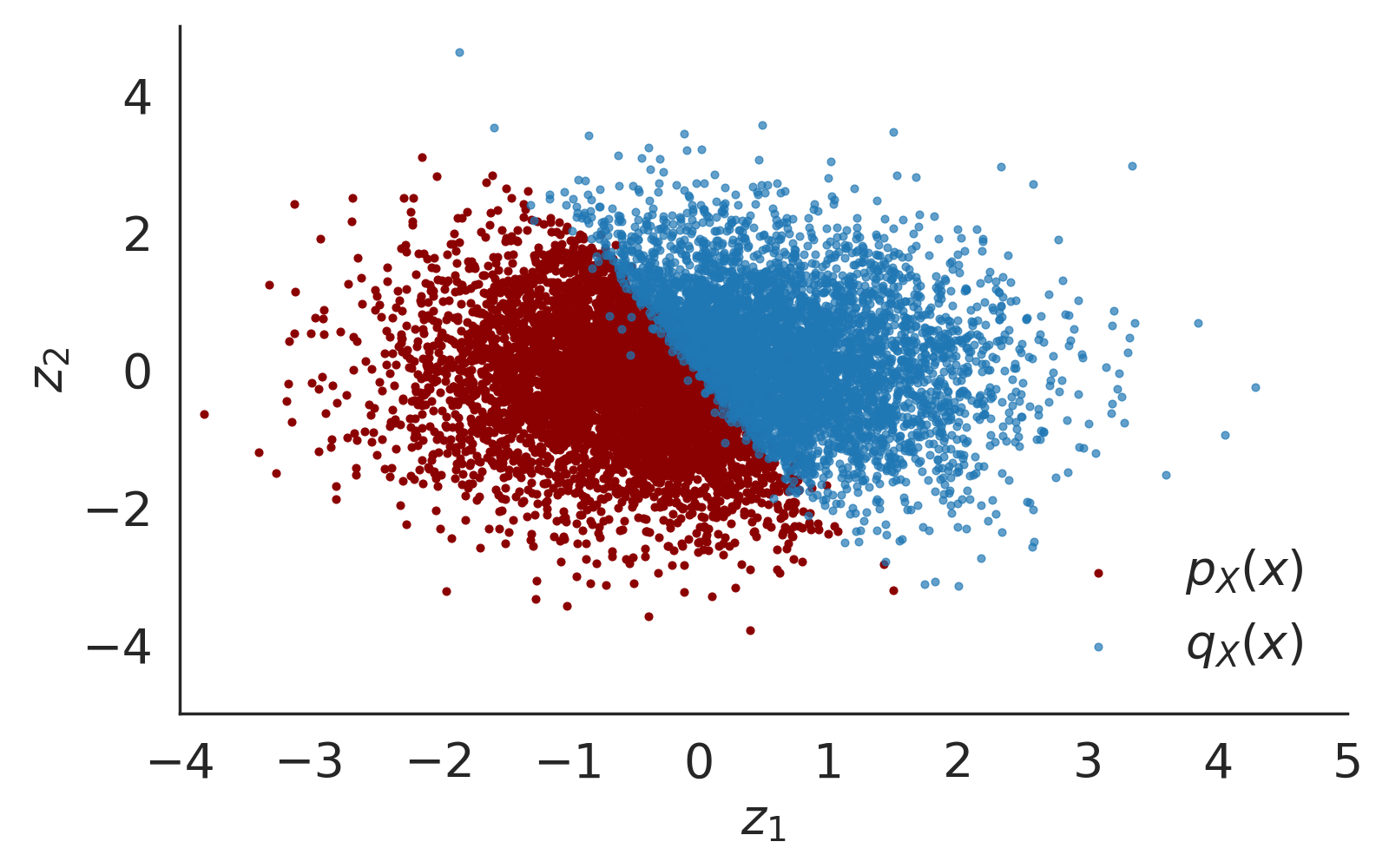}}
        \subfigure[Joint training ($\alpha=0.5$)]{\includegraphics[width=.24\textwidth]{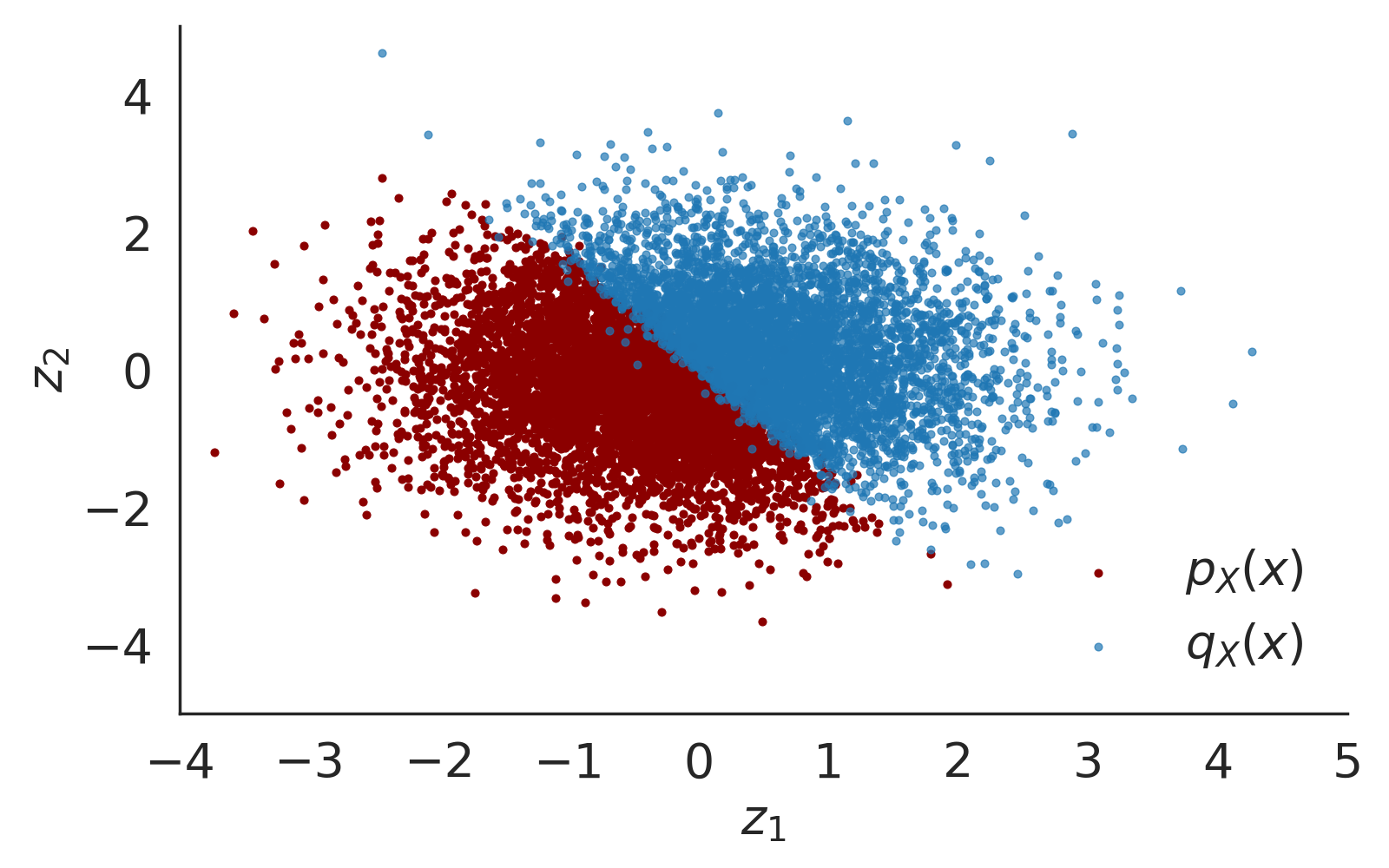}}
        \subfigure[Joint training ($\alpha=0.7$)]{\includegraphics[width=.24\textwidth]{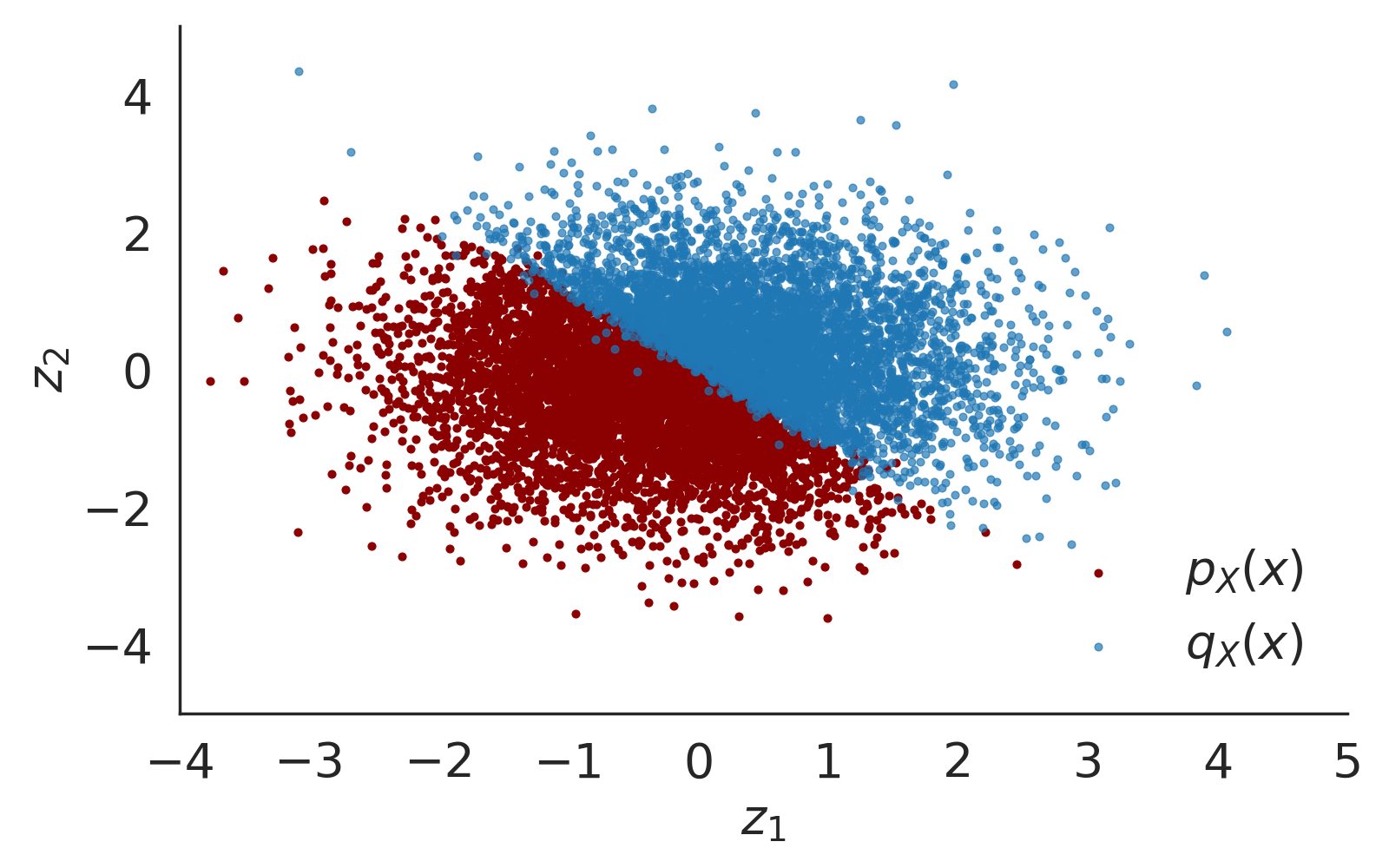}}
    \caption{\textit{Top row:} Additional results on the motivating example on a synthetic 2-D Gaussian dataset, with learned density ratio estimates by method relative to the ground truth values for (a-d).
    \textit{Bottom row:} Visualizations of the learned encodings for various training strategies for (e-h). We note that the jointly trained flow with the smallest value of $\alpha=0.01$ performs the best out of $\alpha=\{0.01, 0.1, 0.5, 0.7\}$.}
    \label{fig:gmm_results_supp}
\end{figure*}

\subsection{Additional Experimental Results}
\label{addtl-results}
\subsubsection{Toy Gaussian Mixture Experiment}
We provide additional experimental results on the motivating 2-D Gaussian mixture example introduced in Section~\ref{method}, where we sweep through additional values of $\alpha=\{0.01, 0.1, 0.5, 0.7\}$ on top of the one explored in the main text ($\alpha=0.9$). For reference, Figure~\ref{fig:toy_gmm_data_supp} displays the (a) ground truth data and log density ratios (b-c) that we hope to learn from samples. Results are shown in Figure~\ref{fig:gmm_results_supp}. A visual inspection of the 4 joint training procedures demonstrates that for this experiment, the jointly trained flow with the smallest contribution of the classification loss ($\alpha=0.01$ in (a)) outperforms all other methods (b-d). The learned feature space most closely resembles that of the separately trained flow in Figure~\ref{fig:gmm_encodings}(f), while the boundary separating the two densities $p$ and $q$ for the other models are skewed more to the left. 

\subsubsection{2-D Mixture of Gaussians for Featurized KLIEP/KMM} \label{kernel_synth}
In this experiment, we construct a synthetic domain adaptation task using 2-D Gaussian mixtures. Our goal is to assess whether our featurized density ratio estimation framework improves the performance of KMM and KLIEP, which operate in input space. We construct our source dataset as $\mathcal{D}_p \sim p(\bx) = 0.01 \cdot \mathcal{N}([0,0]^T, I) + 0.99 \cdot \mathcal{N}([3,3]^T, I)$, and our target dataset as $\mathcal{D}_q \sim q(\bx) = 0.99 \cdot \mathcal{N}([0,0]^T, I) + 0.01 \cdot \mathcal{N}([3,3]^T, I)$, where both datasets have $n=1000$ samples. We label samples from $\mathcal{N}([0,0]^T, I)$ as $y=1$ and samples from $\mathcal{N}([3,3]^T, I)$ as $y=0$. Then, we train a logistic regression classifier to distinguish between the two classes using 3 methods: 1) an unweighted logistic regression baseline, 2) reweighted logistic regression with importance weights computed in input space, and 3) reweighted logistic regression with importance weights computed in feature space. The importance weights are learned on a mixture of the source and target datasets. 

Results are shown in Table~\ref{table:synthetic_exp}.
\begin{table}[ht!]
\centering
\begin{tabular}{l|c|c}
\toprule
\textbf{Method} & \textbf{KMM} & \textbf{KLIEP} \\
\midrule
Unweighted logistic regression baseline & 0.236 $\pm$ 0.0456  & 0.236 $\pm$ 0.0456 \\ 
Logistic regression + IW(x) & 0.163 $\pm$ 0.0615  & 0.163 $\pm$ 0.0548 \\ 
Logistic regression + IW(z) (ours) & \textbf{0.0408} $\pm$ \textbf{0.0443}   & \textbf{0.125} $\pm$ \textbf{0.0269}\\ 
 \bottomrule
\end{tabular}
\caption{Comparison between test errors for unweighted logistic regression and reweighted x-space and z-space logistic regression on the 2-D Mixture of Gaussians dataset. Lower is better. Standard errors were computed over $10$ runs.}
\label{table:synthetic_exp}
\end{table}

\subsubsection{Domain Adaptation with the UCI Breast Cancer Dataset}
\label{appendix:uci}
We provide full experimental results of our domain adaptation experiment with the UCI Breast Cancer dataset in Table~\ref{table:uci} and Figure~\ref{fig:kliep_uci}. Results were computed over $30$ runs. We note that our method improves upon KMM for most values of $C$ and achieves the best absolute test error out of all combinations of $C$ with different methods. We also note that KLIEP performs poorly on this task, regardless of the method we use. 
\begin{table}[H]
\centering
\begin{tabular}{l|c|c|c|c}
\toprule
\textbf{KMM} & $C$=0.1 & $C$=1 & $C$=10 & $C$=100 \\
\midrule
Unweighted baseline &0.616 $\pm$ 0.0940  & 0.537 $\pm$ 0.167 & 0.591 $\pm$ 0.104 & 0.587 $\pm$ 0.114 \\ 
IW(x) & \textbf{0.596 $\pm$ 0.116} & 0.532 $\pm$ 0.198 & 0.577 $\pm$ 0.120 & 0.576 $\pm$ 0.118\\ 
IW(z) (ours) & 0.630 $\pm$ 0.0766   & \textbf{0.418} $\pm$ \textbf{0.221} & \textbf{0.421 $\pm$ 0.232} & \textbf{0.424 $\pm$ 0.230} \\
\midrule
\textbf{KLIEP} & $C$=0.1 & $C$=1 & $C$=10 & $C$=100 \\
\midrule
Unweighted baseline &0.616 $\pm$ 0.0940  & \textbf{0.537 $\pm$ 0.167} & 0.591 $\pm$ 0.104 & 0.587 $\pm$ 0.115 \\ 
IW(x) & \textbf{0.519 $\pm$ 0.214} & 0.589 $\pm$ 0.121 & \textbf{0.588 $\pm$ 0.114} & 0.587 $\pm$ 0.115\\ 
IW(z) (ours) & 0.650 $\pm$ 0.0109   & 0.55 $\pm$ 0.177 & 0.590 $\pm$ 0.126 & \textbf{0.586 $\pm$ 0.119} \\
\bottomrule
\end{tabular}
\caption{Comparison between test errors of an unweighted SVM and reweighted x-space and z-space SVMs on classification of the UCI Breast Cancer dataset with the biased class label sampling scheme. Standard errors were computed over $30$ runs.}
\label{table:uci}
\end{table}
\begin{figure}
    \centering
    \includegraphics[width=0.45\textwidth]{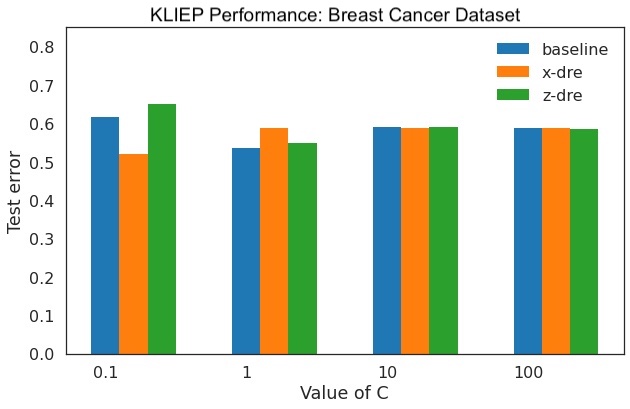}
    \caption{KLIEP test error of each method on binary classification of the UCI Breast Cancer dataset using a SVM parameterized by varying values of $C$. Lower is better. Results are averaged over 30 runs.}
    \label{fig:kliep_uci}
\end{figure}

\subsubsection{Omniglot samples from DAGAN}
A sample of $n=100$ examples synthesized by the trained DAGAN, used for the data augmentation experiments in Section~\ref{experiments}, are shown in Figure~\ref{fig:omni_samples}.

\subsubsection{Mutual Information Estimation}
In Figure~\ref{fig:mi_v2}, we replicate Figure~\ref{fig:mi} with additional results from joint training procedures using different values of $\alpha=\{0.1, 0.5, 0.9\}$ in Equation~\ref{joint_obj}. Specifically, we note that $\alpha=0.9$ outperforms all other jointly-trained models, indicating that a greater emphasis on the classification loss term helps for this experiment.
\subsubsection{Samples from MNIST Targeted Generation Task}
For each DRE in z-space, DRE in x-space, and unweighted settings and for \texttt{perc}=$\{0.1, 0.25, 0.5, 1.0\}$, Figures \ref{fig:mnist_diff_bkgd_fairgen_z}, \ref{fig:mnist_diff_bkgd_fairgen_x}, and \ref{fig:mnist_diff_bkgd_reg_gen} show $n = 100$ MAF-generated samples from the \texttt{diff-background} experiments and Figures \ref{fig:mnist_diff_digits_fairgen_z}, \ref{fig:mnist_diff_digits_fairgen_x}, and \ref{fig:mnist_diff_digits_reg_gen}, show $n = 100$ MAF-generated samples from the \texttt{diff-digits} experiments.

\begin{figure}[t]
    \centering
        \includegraphics[width=.23\textwidth]{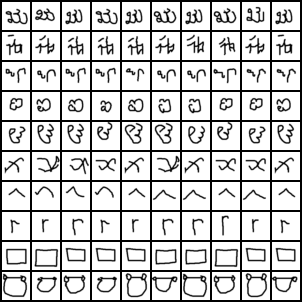}
    \caption{Generated samples from trained DAGAN, which are used as synthetic examples for data augmentation in the downstream Omniglot classification experiment.}
    \label{fig:omni_samples}
   \end{figure}

\begin{figure}[t]
    \centering
        \includegraphics[width=.65\textwidth]{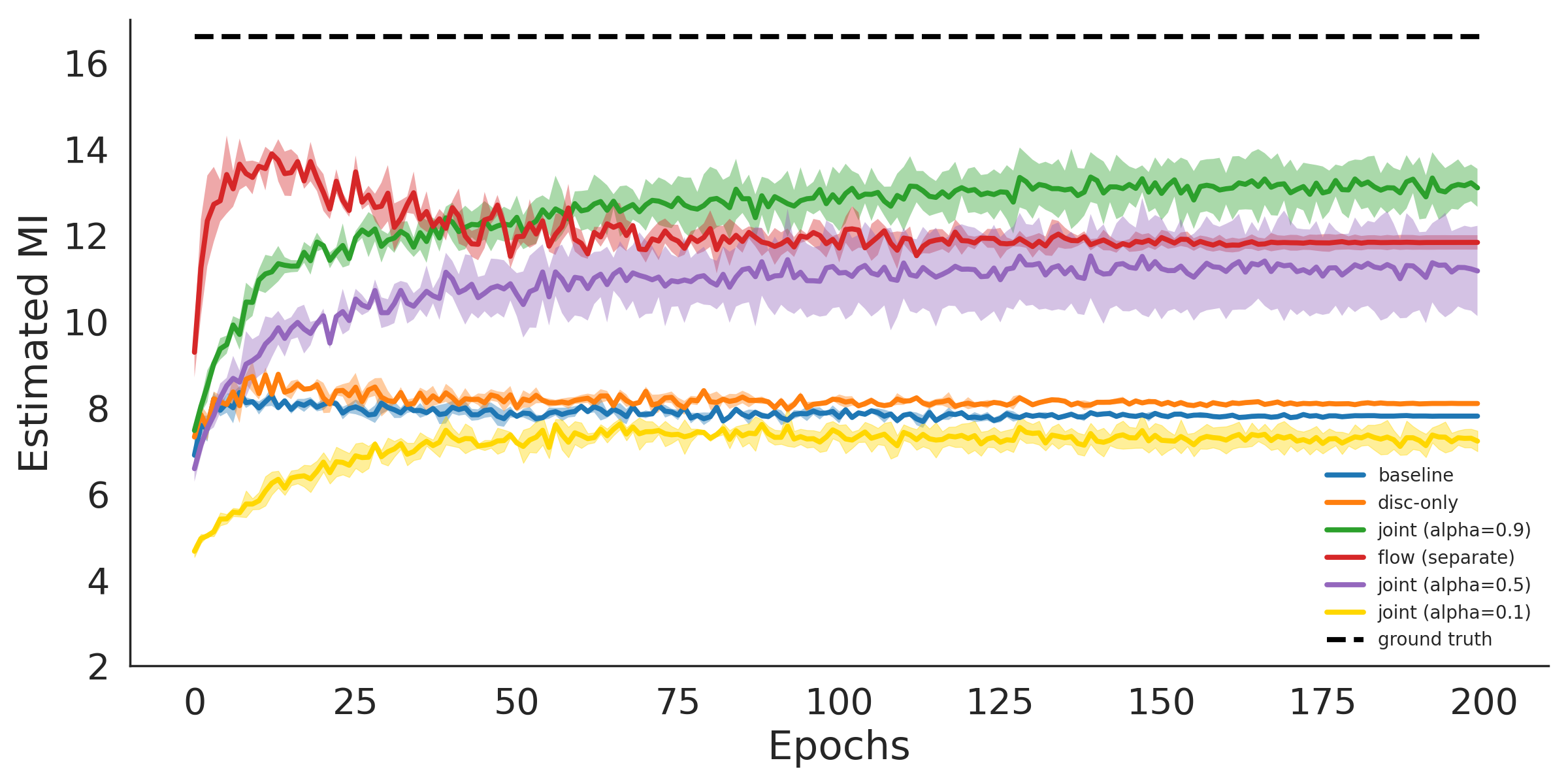}
    \caption{Estimated MI for the various training strategies. The true MI for the corresponding value of $\rho=0.9$ is $16.67$. While the separate training method outperforms all baselines, we note that joint training also achieves competitive performance with larger values of $\alpha$.}
    \label{fig:mi_v2}
   \end{figure}
\pagebreak
\begin{figure}[t]
        \centering %
        \subfigure[perc=0.1]{\includegraphics[width=.2\textwidth]{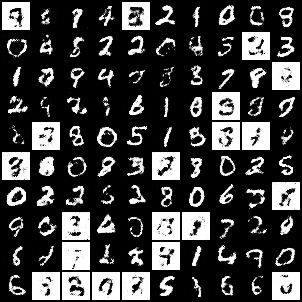}}
        \subfigure[perc=0.25]{\includegraphics[width=.2\textwidth]{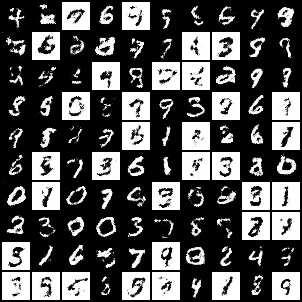}}
        \subfigure[perc=0.5]{\includegraphics[width=.2\textwidth]{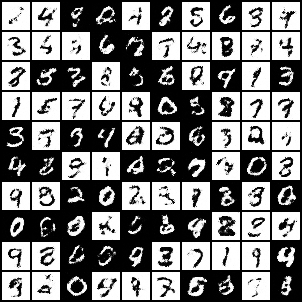}}
        \subfigure[perc=1.0]{\includegraphics[width=.2\textwidth]{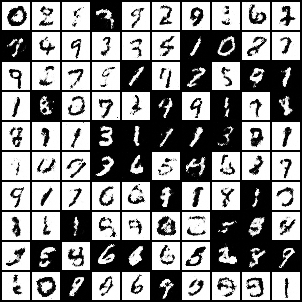}}
    \caption{SIR sampling with DRE in z-space}\label{fig:mnist_diff_bkgd_fairgen_z}
   \end{figure}
   
   \begin{figure}[t]
        \centering %
        \subfigure[perc=0.1]{\includegraphics[width=.2\textwidth]{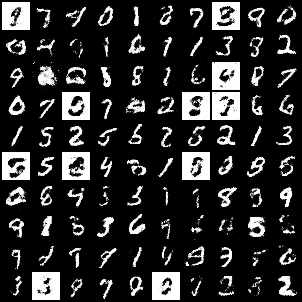}}
        \subfigure[perc=0.25]{\includegraphics[width=.2\textwidth]{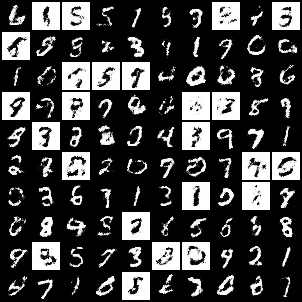}}
        \subfigure[perc=0.5]{\includegraphics[width=.2\textwidth]{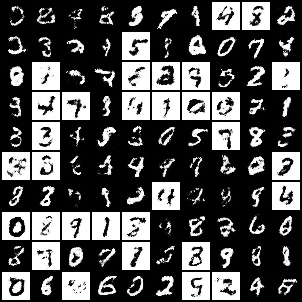}}
        \subfigure[perc=1.0]{\includegraphics[width=.2\textwidth]{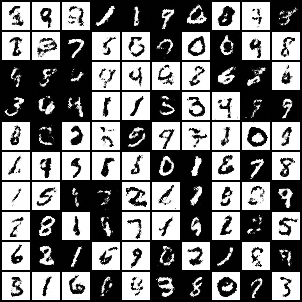}}
    \caption{SIR sampling with DRE in x-space}\label{fig:mnist_diff_bkgd_fairgen_x}
   \end{figure}
  \begin{figure}[t]
        \centering %
        \subfigure[perc=0.1]{\includegraphics[width=.2\textwidth]{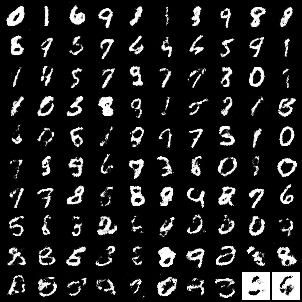}}
        \subfigure[perc=0.25]{\includegraphics[width=.2\textwidth]{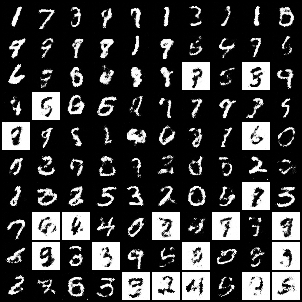}}
        \subfigure[perc=0.5]{\includegraphics[width=.2\textwidth]{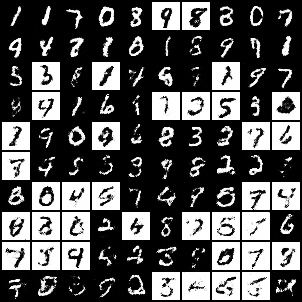}}
        \subfigure[perc=1.0]{\includegraphics[width=.2\textwidth]{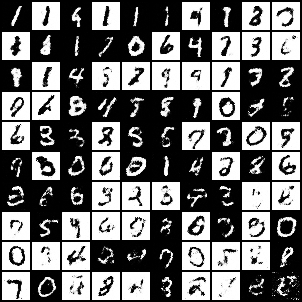}}
    \caption{Regular sampling}\label{fig:mnist_diff_bkgd_reg_gen}
   \end{figure}

\begin{figure}[t]
        \centering %
        \subfigure[perc=0.1]{\includegraphics[width=.2\textwidth]{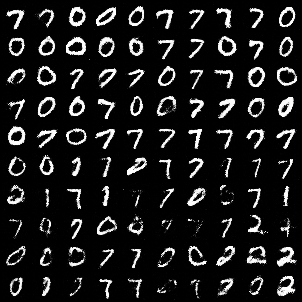}}
        \subfigure[perc=0.25]{\includegraphics[width=.2\textwidth]{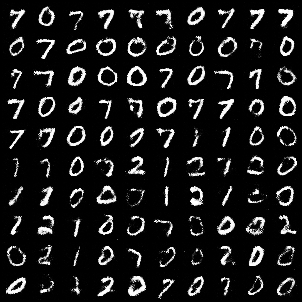}}
        \subfigure[perc=0.5]{\includegraphics[width=.2\textwidth]{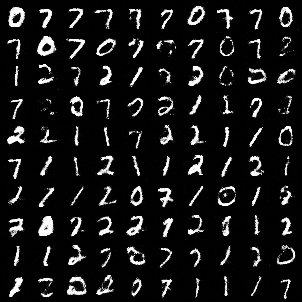}}
        \subfigure[perc=1.0]{\includegraphics[width=.2\textwidth]{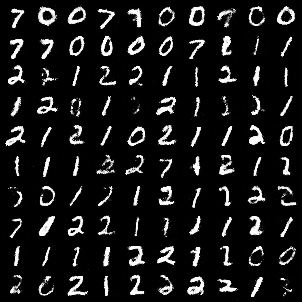}}
    \caption{SIR sampling with DRE in z-space}\label{fig:mnist_diff_digits_fairgen_z}
   \end{figure}
   \begin{figure}[t]
        \centering %
        \subfigure[perc=0.1]{\includegraphics[width=.2\textwidth]{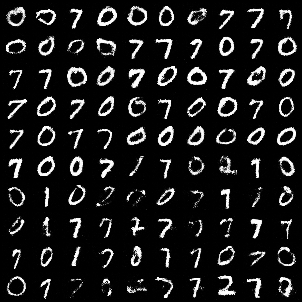}}
        \subfigure[perc=0.25]{\includegraphics[width=.2\textwidth]{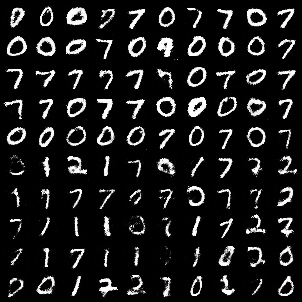}}
        \subfigure[perc=0.5]{\includegraphics[width=.2\textwidth]{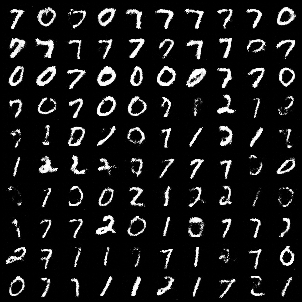}}
        \subfigure[perc=1.0]{\includegraphics[width=.2\textwidth]{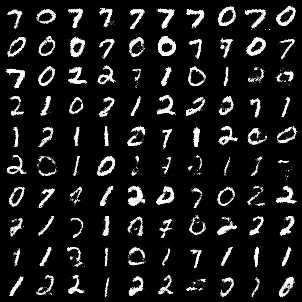}}
    \caption{SIR sampling with DRE in x-space}\label{fig:mnist_diff_digits_fairgen_x}
   \end{figure}
  \begin{figure}[t]
        \centering %
        \subfigure[perc=0.1]{\includegraphics[width=.2\textwidth]{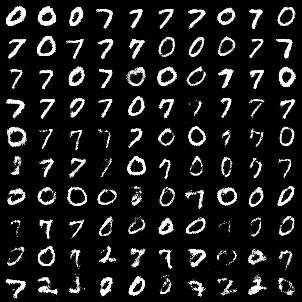}}
        \subfigure[perc=0.25]{\includegraphics[width=.2\textwidth]{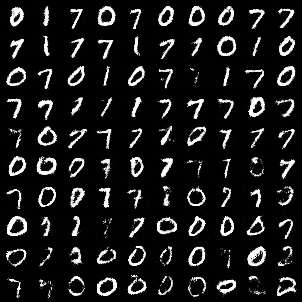}}
        \subfigure[perc=0.5]{\includegraphics[width=.2\textwidth]{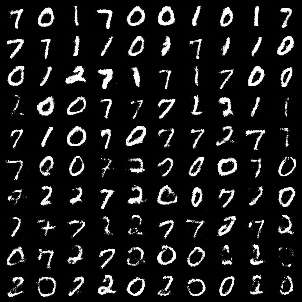}}
        \subfigure[perc=1.0]{\includegraphics[width=.2\textwidth]{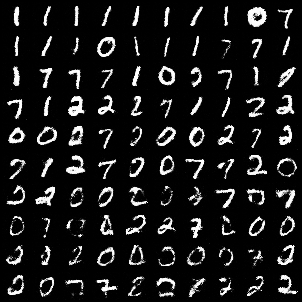}}
    \caption{Regular sampling}\label{fig:mnist_diff_digits_reg_gen}
   \end{figure}

\end{document}